\algrenewcommand\algorithmicindent{0.9em}
\DeclareMathOperator*{\argmin}{arg\,min} 
\newlength{\minipagewidth}
\newlength{\minipagewidthx}
\newcommand{\bookboxx}[1]{\small
\par\medskip\noindent
\framebox[\columnwidth]{
\begin{minipage}{1.05\dimexpr\textwidth-\parindent\relax} {#1} \end{minipage} } \par\medskip }
\newcommand{\beq}{\begin{equation}}
\newcommand{\eeq}{\end{equation}}
\newcommand{\beqa}{\begin{eqnarray}}
\newcommand{\eeqa}{\end{eqnarray}}
\newcommand{\beqan}{\begin{eqnarray*}}
\newcommand{\eeqan}{\end{eqnarray*}}
\renewcommand{\S}{{\cal S}}
\newcommand{\hS}{\widehat{\cal S}}
\newcommand{\hX}{\widehat{\cal X}}
\newcommand{\hY}{\widehat{\cal Y}}
\newcommand{\D}{\mathcal{D}}
\title{Best-Arm Identification in Linear Bandits}
\author{
Marta Soare ~~~~~~~~~ Alessandro Lazaric ~~~~~~~~~ R\'emi Munos\thanks{\small{This work was done when the author was a visiting researcher at Microsoft Research New-England.}}~~\thanks{\small{Current affiliation: Google DeepMind.}} \\
INRIA Lille -- Nord Europe, SequeL Team\\
\texttt{\{marta.soare,alessandro.lazaric,remi.munos\}@inria.fr}
}
\newcommand{\C}{\mathcal C}
\newcommand{\X}{\mathcal X}
\newcommand{\Y}{\mathcal Y}
\newcommand{\Prob}{\mathbb P}
\newcommand{\bx}{\mathbf{x}}
\newcommand{\htheta}{\hat\theta}
\newcommand{\hx}{\hat x}
\newcommand{\eps}{\varepsilon}
\renewcommand{\Re}{\mathbb{R}}
\newcommand{\hDelta}{\widehat{\Delta}}
\newtheorem{lemma}{Lemma}
\newtheorem{proposition}{Proposition}
\newtheorem{theorem}{Theorem}
\begin{document}

\maketitle

\vspace{-0.3in}


\begin{abstract}
\vspace{-0.1in}
We study the best-arm identification problem in linear bandit, where the rewards of the arms depend linearly on an unknown parameter $\theta^*$ and the objective is to return the arm with the largest reward. We characterize the complexity of the problem and introduce sample allocation strategies that pull arms to identify the best arm with a fixed confidence, while minimizing the sample budget. In particular, we show the importance of exploiting the global linear structure to improve the estimate of the reward of near-optimal arms. We analyze the proposed strategies and compare their empirical performance. Finally, as a by-product of our analysis, we point out the connection 
to the $G$-optimality criterion used in optimal experimental design.
\end{abstract}


\vspace{-0.2in}
\section{Introduction}\label{s:intro}
\vspace{-0.1in}

The stochastic multi-armed bandit problem (MAB)~\citep{Robbins52} offers a simple formalization for the study of sequential design of experiments. In the standard model, a learner sequentially chooses an arm out of $K$ and receives a reward drawn from a fixed, unknown distribution relative to the chosen arm. While most of the literature in bandit theory focused on the problem of maximization of cumulative rewards, where the learner needs to trade-off exploration and exploitation, recently the \emph{pure exploration} setting~\citep{pure-exploration} has gained a lot of attention. Here, the learner uses the available budget to identify as accurately as possible the best arm, without trying to maximize the sum of rewards. Although many results are by now available in a wide range of settings (e.g., best-arm identification with fixed budget~\citep{audibert2010best,jamieson2013lil-ucb} and fixed confidence~\citep{action-elimination}, subset selection~\citep{bubeck2013multiple,kaufmann2013information}, and multi-bandit~\citep{gabillon2011multi-bandit}), most of the work considered only the multi-armed setting, with $K$ independent arms.
%
%

An interesting variant of the MAB setup is the stochastic \emph{linear bandit} problem (LB), introduced in~\citep{Auer02}. In the LB setting, the input space $\X$ is a subset of $\Re^d$ and when pulling an arm $x $, the learner observes a reward whose expected value is a linear combination of $x$ 
and an unknown parameter $\theta^*\in\Re^d$. 
Due to the linear structure of the problem, pulling an arm gives information about the parameter $\theta^*$ and indirectly, about the value of other arms. Therefore, the estimation of $K$ mean-rewards is replaced by the estimation of the $d$ features of $\theta^*$. 
While in the exploration-exploitation setting the LB has been widely studied both in theory and in practice (e.g.,~\citep{AbPaSz11,LinUCB}),  in this paper we focus on the pure-exploration scenario.

The fundamental difference between the MAB and the LB best-arm identification strategies stems from the fact that 
in MAB an arm is no longer pulled as soon as its sub-optimality is evident (in high probability), 
while in the LB setting even a sub-optimal arm may offer valuable information about the parameter vector $\theta^*$ and thus improve the accuracy of the estimation in discriminating among near-optimal arms. For instance, consider the situation when $K\!-\!2$ out of $K$ arms are already discarded. In order to identify the best arm, MAB algorithms would concentrate the sampling on the two remaining arms to increase the accuracy of the estimate of their mean-rewards until the discarding condition is met for one of them. On the contrary, a LB pure-exploration strategy would seek to pull the arm $x \in \X$ whose observed reward allows to refine the estimate $\theta^*$ along the dimensions which are more suited in discriminating between the two remaining arms. 
Recently, the best-arm identification in linear bandits has been studied in a fixed budget setting~\citep{HoffmanSF14}, in this paper we study the sample complexity required to identify the best-linear arm with a fixed confidence.

\section{Preliminaries}\label{s:prelim}
\vspace{-0.1in}

\textbf{The setting.} We consider the standard linear bandit model. Let $\X\subseteq \Re^d$ be a finite set of arms, where $|\X| = K$ 
and the $\ell_2$-norm of any arm $x\in\X$, denoted by $||x||$, is upper-bounded by $L$. Given an unknown parameter $\theta^*\in\Re^d$, we assume that each time an arm $x\in\X$ is pulled, a random reward $r(x)$ is generated according to the linear model $r(x) = x^\top \theta^* + \eps$,
%
%
where $\eps$ 
is a zero-mean i.i.d. noise bounded in $[-\sigma;\sigma]$.
Arms are evaluated according to their expected reward $x^\top \theta^*$ and we denote by $x^* = \arg\max_{x\in\X} x^\top \theta^*$ the best arm in $\X$. Also, we use $\Pi(\theta) = \arg\max_{x\in\X} x^\top \theta$ to refer to the best arm corresponding to an arbitrary parameter $\theta$. Let $\Delta(x,x') = (x-x')^\top\theta^*$ be the value \textit{gap} between two arms, then we denote by $\Delta(x) = \Delta(x^*,x)$ the gap of $x$ w.r.t.\ the optimal arm and by $\Delta_{\min} = \min_{x\in\X} \Delta(x)$ the minimum gap, where $\Delta_{\min}> 0$. We also introduce the sets $\Y = \{y=x-x', \forall x,x'\in\X\}$ and $\Y^*=\{y=x^*-x, \forall x\in\X\}$ containing all the directions obtained as the difference of two arms (or an arm and the optimal arm) and we redefine accordingly the gap of a direction as $\Delta(y) = \Delta(x,x')$ whenever $y=x-x'$.

\textbf{The problem.} We study the best-arm identification problem. Let $\hx(n)$ be the estimated best arm returned by a bandit algorithm after $n$ steps. We evaluate the \textit{quality} of $\hx(n)$ by the simple regret $R_n = (x^* - \hx(n))^\top \theta^*$.
While different settings can be defined (see~\citep{gabillon2012best} for an overview), here we focus on the $(\epsilon,\delta)$-best-arm identification problem (the so-called PAC setting), where given $\epsilon$ and $\delta\in(0,1)$, the objective is to design an allocation strategy and a stopping criterion so that when the algorithm stops, the returned arm $\hx(n)$ is such that $\Prob\big( R_n \geq \epsilon \big) \leq \delta$,
%
%
while minimizing the needed number of steps. More specifically, we will focus on the case of $\epsilon=0$ and we will provide high-probability bounds on the sample complexity $n$.

\textbf{The multi-armed bandit case.} 
In MAB, the complexity of best-arm identification is characterized by the gaps between arm values, following the intuition that the more similar the arms, the more pulls are needed to distinguish between them. More formally, the complexity is given by the problem-dependent quantity $H_\text{MAB} = \sum_{i=1}^K \frac{1}{\Delta_i^2}$ 
i.e.,
%
%
the inverse of the pairwise gaps between the best arm and the suboptimal arms. In the fixed budget case, $H_\text{MAB}$ determines the probability of returning the wrong arm~\citep{audibert2010best}, while in the fixed confidence case, it characterizes the sample complexity~\citep{action-elimination}.


\textbf{Technical tools.} Unlike in the multi-arm bandit scenario where pulling one arm does not provide any information about other arms, in a linear model we can leverage the rewards observed over time 
to estimate the expected reward of all the arms in $\X$. Let $\bx_n = (x_1,\ldots,x_n)\in\X^n$ be a sequence of arms and $(r_1,\ldots,r_n)$ the corresponding observed (random) rewards. An unbiased estimate of $\theta^*$ can be obtained by ordinary least-squares (OLS) as $\htheta_n = A_{\bx_n}^{-1} b_{\bx_n}$,
%
%
where $A_{\bx_n} = \sum_{t=1}^n x_t x_t^\top \in \Re^{d\times d}$ and $b_{\bx_n} = \sum_{t=1}^n x_t r_t \in \Re^d$. 
For any fixed sequence $\bx_n$, through Azuma's inequality, the prediction error of the OLS estimate is upper-bounded in high-probability as follows. 

\begin{proposition}\label{p:bound}
Let $c = 2\sigma \sqrt{2}$ and $c'= 6/\pi^2$. 
For every fixed sequence $\bx_n$, we have\footnote{Whenever Prop.1 is used for all directions $y\in\Y$, then the logarithmic term becomes $\log(c'n^2K^2/\delta)$ because of an additional union bound. For the sake of simplicity, in the sequel we always use $\log_n(K^2/\delta)$.}
\begin{align}\label{eq:err.bound}
\mathbb{P} \left( \forall n \in \mathbb{N}, \forall x\in\X,
\big| x^\top \theta^* - x^\top \htheta_n \big| \leq c ||x||_{A_{\bx_n}^{-1}} \sqrt{\log(c' n^2 K/\delta)}
\right) \geq 1-\delta.
\end{align}
\end{proposition}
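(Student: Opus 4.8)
The plan is to write the prediction error $x^\top\theta^* - x^\top\htheta_n$ as a fixed linear combination of the noise variables, identify the relevant sum of squared coefficients with $\|x\|_{A_{\bx_n}^{-1}}^2$, apply an Azuma/Hoeffding bound at each fixed $n$, and close with a union bound over the $K$ arms and over $n\in\mathbb N$. For the linearization, since $r_t = x_t^\top\theta^* + \eps_t$ we have $b_{\bx_n} = A_{\bx_n}\theta^* + \sum_{t=1}^n x_t\eps_t$, hence $\htheta_n - \theta^* = A_{\bx_n}^{-1}\sum_{t=1}^n x_t\eps_t$ and, for any arm $x\in\X$,
\[
x^\top\htheta_n - x^\top\theta^* \;=\; \sum_{t=1}^n \big(x^\top A_{\bx_n}^{-1} x_t\big)\,\eps_t \;=:\; \sum_{t=1}^n w_t\,\eps_t ,
\]
where the coefficients $w_t = x^\top A_{\bx_n}^{-1} x_t$ are deterministic since the sequence $\bx_n$ is fixed.

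The key algebraic fact is the identity $\sum_{t=1}^n w_t^2 = x^\top A_{\bx_n}^{-1}\big(\sum_{t=1}^n x_t x_t^\top\big) A_{\bx_n}^{-1} x = x^\top A_{\bx_n}^{-1} x = \|x\|_{A_{\bx_n}^{-1}}^2$, which is exactly the quantity appearing in~\eqref{eq:err.bound}; this is what produces the weighted norm. Now, conditioned on the fixed sequence, $\sum_{t=1}^n w_t\eps_t$ is a sum of independent, zero-mean, bounded terms (a martingale difference sequence with $|w_t\eps_t|\le |w_t|\sigma$), so Azuma's inequality together with the above identity gives, for an absolute constant $c_0$ (equal to $2\sqrt 2$ under the stated $[-\sigma,\sigma]$ noise bound), any $u>0$ and any fixed $n$,
\[
\Prob\Big(\big|x^\top\htheta_n - x^\top\theta^*\big| > u\Big) \;\le\; 2\exp\!\Big(-\tfrac{u^2}{c_0^2\,\sigma^2\,\|x\|_{A_{\bx_n}^{-1}}^2}\Big);
\]
solving for $u$ shows that $|x^\top\htheta_n - x^\top\theta^*| \le c_0\sigma\,\|x\|_{A_{\bx_n}^{-1}}\sqrt{\log(2/\delta_n)}$ holds with probability at least $1-\delta_n$.

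Finally, apply this bound with $\delta_n$ of the form $\delta_n \asymp \delta/(n^2 K)$ simultaneously over all $n\in\mathbb N$ and all $x\in\X$: since $|\X| = K$ and $\sum_{n\ge1} n^{-2} = \pi^2/6$, the union of failure events has probability at most $\delta$, and the logarithmic term becomes $\log(c' n^2 K/\delta)$ with $c' = 6/\pi^2$, yielding~\eqref{eq:err.bound} with $c = c_0\sigma = 2\sigma\sqrt 2$. The only point needing care is that the statement ranges over all $n\in\mathbb N$, including early steps where $A_{\bx_n}$ is singular and $\|x\|_{A_{\bx_n}^{-1}} = +\infty$, in which case the inequality holds vacuously; for $n$ large enough that $A_{\bx_n}\succ 0$ the argument applies verbatim. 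Beyond this the proof is routine — the substantive step is the identity $\sum_t w_t^2 = \|x\|_{A_{\bx_n}^{-1}}^2$, and pinning down $c,c'$ is just bookkeeping in the union bound.
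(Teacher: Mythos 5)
Your proof is correct and follows exactly the route the paper intends (the paper only says ``through Azuma's inequality'' and gives no written proof): linearize the OLS error as $\sum_t (x^\top A_{\bx_n}^{-1}x_t)\eps_t$, use the identity $\sum_t w_t^2 = \|x\|_{A_{\bx_n}^{-1}}^2$, apply Azuma/Hoeffding for the fixed sequence, and union-bound over the $K$ arms and over $n$ with weights proportional to $n^{-2}$. The handling of the singular-$A_{\bx_n}$ case and the constant bookkeeping are both fine.
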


While in the previous statement $\bx_n$ is fixed, a bandit algorithm adapts the allocation in response to the rewards observed over time. In this case a different high-probability bound is needed.

\begin{proposition}[Thm.~2 in~\citep{AbPaSz11}]\label{p:adaptive.bound}
Let $\htheta_n^\eta$ be the solution to the regularized least-squares problem with regularizer $\eta$ and let $\widetilde{A}_{\bx}^\eta=\eta I_d + A_{\bx}$. Then for all $x\in\X$ and every adaptive sequence $\bx_n$ such that at any step $t$, $x_t$ only depends on $(x_1,r_1,\ldots,x_{t-1},r_{t-1})$, w.p. $1-\delta$, we have
\begin{align}\label{eq:err.adaptive.bound}
\big| x^\top \theta^* - x^\top \htheta_n^{\eta} \big| \leq ||x||_{(\widetilde{A}^\eta_{\bx_n})^{-1}} \bigg(\sigma \sqrt{d\log \Big(\frac{1+nL^2/\eta}{\delta}\Big)} +\eta^{1/2}||\theta^*||\bigg).
\end{align}
\end{proposition}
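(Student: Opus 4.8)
The plan is to prove this by the self-normalized martingale concentration inequality obtained through the \emph{method of mixtures}, which is exactly the argument behind Thm.~2 of~\citep{AbPaSz11}. The first step is to isolate the noise: writing $b_{\bx_n}=\sum_{t=1}^n x_t r_t = A_{\bx_n}\theta^* + S_n$ with $S_n=\sum_{t=1}^n \eps_t x_t$, one gets $\htheta_n^\eta-\theta^* = (\widetilde A^\eta_{\bx_n})^{-1}(S_n-\eta\theta^*)$, hence by Cauchy--Schwarz in the $(\widetilde A^\eta_{\bx_n})^{-1}$-norm,
\begin{align*}
\big|x^\top\theta^*-x^\top\htheta_n^\eta\big| \;\le\; \|x\|_{(\widetilde A^\eta_{\bx_n})^{-1}}\Big(\|S_n\|_{(\widetilde A^\eta_{\bx_n})^{-1}} + \eta\,\|\theta^*\|_{(\widetilde A^\eta_{\bx_n})^{-1}}\Big).
\end{align*}
Since $\widetilde A^\eta_{\bx_n}\succeq\eta I_d$, the last summand is at most $\eta^{1/2}\|\theta^*\|$, which is exactly the deterministic term in the claim, so everything reduces to a high-probability, uniform-in-$n$ control of the self-normalized quantity $\|S_n\|_{(\widetilde A^\eta_{\bx_n})^{-1}}$.

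For that term I would use that $\eps_t$, being zero-mean and bounded in $[-\sigma,\sigma]$, is conditionally $\sigma$-sub-Gaussian, so that for each fixed $\lambda\in\Re^d$ the process $M_n^\lambda=\exp\!\big(\tfrac1\sigma\lambda^\top S_n-\tfrac12\lambda^\top A_{\bx_n}\lambda\big)$ is a nonnegative supermartingale with $\E[M_n^\lambda]\le 1$; here the adaptivity hypothesis (that $x_t$ is measurable w.r.t.\ the past) is precisely what lets one condition step by step. Mixing over $\lambda$ with the Gaussian prior $\N(0,\eta^{-1}I_d)$ preserves the supermartingale property (Fubini, since everything is nonnegative), and the resulting Gaussian integral is computable in closed form:
\begin{align*}
\bar M_n \;=\; \Big(\tfrac{\det(\eta I_d)}{\det(\widetilde A^\eta_{\bx_n})}\Big)^{1/2}\exp\!\Big(\tfrac{1}{2\sigma^2}\|S_n\|_{(\widetilde A^\eta_{\bx_n})^{-1}}^2\Big),
\end{align*}
again a nonnegative supermartingale with $\E[\bar M_n]\le 1$ and $\bar M_0=1$. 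A maximal inequality for nonnegative supermartingales (together with the usual stopped-process / limiting argument to pass from a fixed time to ``for all $n$'') gives $\Prob(\exists n:\bar M_n\ge 1/\delta)\le\delta$, i.e.\ with probability $1-\delta$, simultaneously for all $n$,
\begin{align*}
\|S_n\|_{(\widetilde A^\eta_{\bx_n})^{-1}}^2 \;\le\; 2\sigma^2\log\!\Big(\tfrac{\det(\widetilde A^\eta_{\bx_n})^{1/2}\det(\eta I_d)^{-1/2}}{\delta}\Big).
\end{align*}

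It then remains to bound the determinant ratio: since $\tr(A_{\bx_n})=\sum_{t=1}^n\|x_t\|^2\le nL^2$, AM--GM on the eigenvalues of $\widetilde A^\eta_{\bx_n}$ gives $\det(\widetilde A^\eta_{\bx_n})\le(\eta+nL^2/d)^d$, so $\det(\widetilde A^\eta_{\bx_n})^{1/2}\det(\eta I_d)^{-1/2}\le(1+nL^2/\eta)^{d/2}$. Substituting this into the last display and combining with the first one bounds $\big|x^\top\theta^*-x^\top\htheta_n^\eta\big|$ by $\|x\|_{(\widetilde A^\eta_{\bx_n})^{-1}}\big(\sigma\sqrt{d\log(1+nL^2/\eta)+2\log(1/\delta)}+\eta^{1/2}\|\theta^*\|\big)$, which is the stated bound up to the cosmetic step of absorbing $2\log(1/\delta)$ into $d\log\frac{1+nL^2/\eta}{\delta}$.

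The step I expect to be the real obstacle is the self-normalized concentration itself: verifying the supermartingale property of $M_n^\lambda$ and of its mixture $\bar M_n$ (the conditional sub-Gaussian estimate plus a careful interchange of $\E$ and the $\lambda$-integral), evaluating the Gaussian integral, and --- most delicately --- upgrading the pointwise tail bound to a genuinely uniform-in-$n$ statement via a stopping-time argument rather than a crude union bound over $n$. By comparison, the linear-algebra parts (the decomposition of $\htheta_n^\eta-\theta^*$, Cauchy--Schwarz, and the trace/determinant estimate) are routine.
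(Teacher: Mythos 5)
Your proposal is correct: the paper does not prove this proposition itself but imports it verbatim as Theorem~2 of~\citep{AbPaSz11}, and your reconstruction (the decomposition $\htheta_n^\eta-\theta^*=(\widetilde A^\eta_{\bx_n})^{-1}(S_n-\eta\theta^*)$, the method-of-mixtures self-normalized supermartingale bound, and the determinant--trace estimate) is exactly the argument given in that reference. The only cosmetic remark is that your derivation yields $\sqrt{2\log(1/\delta)+d\log(1+nL^2/\eta)}$, which is absorbed into the stated $\sqrt{d\log\big(\frac{1+nL^2/\eta}{\delta}\big)}$ for $d\geq 2$, matching the form in the cited theorem.
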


The crucial difference w.r.t.\ Eq.~\ref{eq:err.bound} is an additional factor $\sqrt{d}$, the price to pay for adapting $\bx_n$ to the samples. 
In the sequel we will often resort to the notion of design (or ``soft'' allocation) $\lambda \in \D^k$, which prescribes the \textit{proportions} of pulls to arm $x$ and $\D^k$ denotes the simplex $\X$. The counterpart of the design matrix $A$ for a design $\lambda$ is the matrix $\Lambda_\lambda = \sum_{x\in\X} \lambda(x) xx^\top$. From an allocation $\bx_n$ we can derive the corresponding design $\lambda_{\bx_n}$ as $\lambda_{\bx_n}(x) = T_n(x)/n$, where $T_n(x)$ is the number of times arm $x$ is selected in $\bx_n$, and the corresponding design matrix is $A_{\bx_n} = n\Lambda_{\lambda_{\bx_n}}$.


\section{The Complexity of the Linear Best-Arm Identification Problem}\label{s:oracle}
\vspace{-0.1in}

\begin{wrapfigure}[15]{rh}{0.40\textwidth}
\vspace{-0.15in}
\centering
\includegraphics[scale = 0.36]{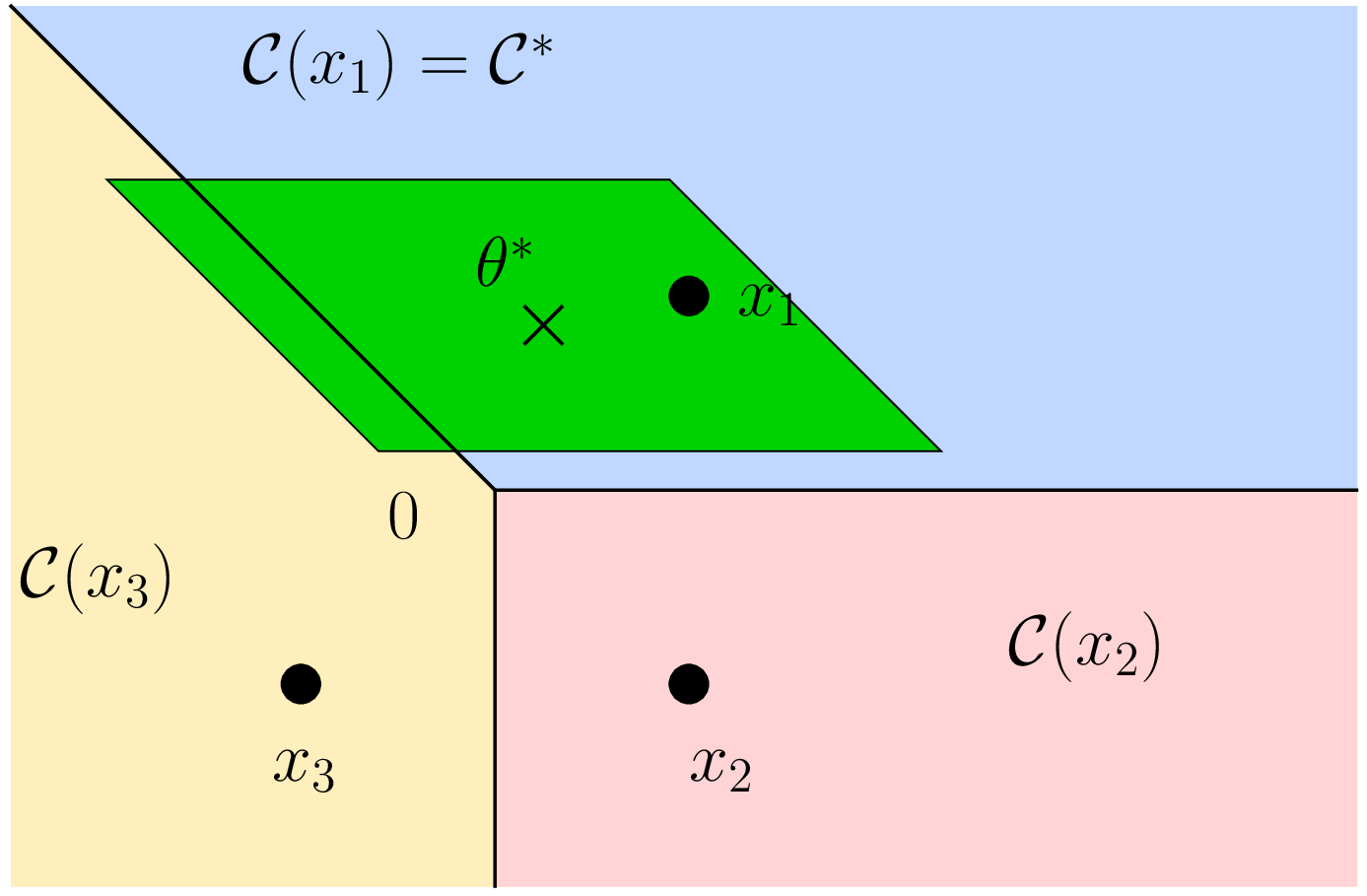}
\vspace{-0.05in}
\caption{{\small The cones corresponding to three arms (dots) in $\mathbb R^2$. Since $\theta^* \in \mathcal{C}(x_1)$, then $x^* =x_1$. The confidence set $\S^*(\bx_n)$ (in green) is aligned with directions $x_1-x_2$ and $x_1-x_3$. Given the uncertainty in $\S^*(\bx_n)$, both $x_1$ and $x_3$ may be optimal.}}
\label{f:cones}
\end{wrapfigure}


As reviewed in Sect.~\ref{s:prelim}, in the MAB case the complexity of the best-arm identification task is characterized by the reward gaps between the optimal and suboptimal arms. In this section, we propose an extension of the notion of complexity to the case of linear best-arm identification. In particular, we characterize the complexity by the performance of an \textit{oracle} with access to the parameter $\theta^*$.

\textbf{Stopping condition.} Let $\C(x) \!=\! \{\theta\in\Re^d, x\in \Pi(\theta)\}$ be the set of parameters $\theta$ which admit $x$ as an optimal arm. As illustrated in Fig.~\ref{f:cones}, $\C(x)$ is the cone defined by the intersection of half-spaces such that $\C(x) = \cap_{x'\in\X} \{\theta\in\Re^d, (x-x')^\top \theta \geq 0 \}$ and all the cones together form a partition of the Euclidean space $\Re^d$. We assume that the oracle knows the cone $\C(x^*)$ containing all the parameters for which $x^*$ is optimal. Furthermore, we assume that for any allocation $\bx_n$, it is possible to construct a confidence set $\S^*(\bx_n)\subseteq \Re^d$ such that $\theta^*\in\S^*(\bx_n)$ and the (random) OLS estimate $\htheta_n$ belongs to $\S^*(\bx_n)$ with high probability, i.e., $\Prob\big( \htheta_n \in \S^*(\bx_n)\big) \geq 1-\delta$.
%
%
As a result, the oracle stopping criterion simply checks whether the confidence set $\S^*(\bx_n)$ is contained in $\C(x^*)$ or not. In fact, whenever for an allocation $\bx_n$ the set $\S^*(\bx_n)$ overlaps the cones of different arms $x\in\X$, there is ambiguity in the identity of the arm $\Pi(\htheta_n)$. On the other hand when all possible values of $\htheta_n$ are included with high probability in the ``right'' cone $C(x^*)$, then the optimal arm is returned.

\begin{lemma}\label{lem:oracle.stop}
Let $\bx_n$ be an allocation such that $\S^*(\bx_n) \subseteq \C(x^*)$. Then $\Prob\big( \Pi(\htheta_n) \neq x^* \big) \leq \delta$.
\end{lemma}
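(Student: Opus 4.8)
The plan is to unwind the definitions and reduce the claim to the defining property of the confidence set. First I would recall that by hypothesis the OLS estimate satisfies $\Prob\big(\htheta_n \in \S^*(\bx_n)\big) \geq 1-\delta$, so it suffices to show that on the event $\{\htheta_n \in \S^*(\bx_n)\}$ we have $\Pi(\htheta_n) = x^*$. Combining the two then gives $\Prob\big(\Pi(\htheta_n) = x^*\big) \geq 1-\delta$, which is the stated inequality in complementary form.

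The core of the argument is the following deterministic implication: if $\S^*(\bx_n) \subseteq \C(x^*)$ and $\htheta_n \in \S^*(\bx_n)$, then $\htheta_n \in \C(x^*)$. By the definition $\C(x^*) = \cap_{x'\in\X}\{\theta \in \Re^d : (x^* - x')^\top \theta \geq 0\}$, membership $\htheta_n \in \C(x^*)$ means $(x^* - x')^\top \htheta_n \geq 0$ for every $x'\in\X$, i.e.\ $x^{*\top}\htheta_n \geq x'^\top \htheta_n$ for all $x'$. Hence $x^* \in \arg\max_{x\in\X} x^\top\htheta_n = \Pi(\htheta_n)$. Assuming $\Pi(\htheta_n)$ is treated as a single returned arm (ties broken consistently, or the maximizer being unique for a generic $\htheta_n$), this yields $\Pi(\htheta_n) = x^*$, which is exactly what we needed on the good event.

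I do not expect any serious obstacle here; the statement is essentially a reformulation of the stopping condition, and the proof is a two-line chain of set inclusions plus the union-free bound from the confidence-set assumption. The only point worth a careful sentence is the handling of ties in the $\arg\max$ defining $\Pi(\cdot)$: if $\S^*(\bx_n)$ touches the boundary between $\C(x^*)$ and a neighboring cone, $\htheta_n$ could in principle lie on a face where two arms tie. One can dispatch this either by assuming the confidence set is contained in the interior of $\C(x^*)$, or by noting that the tie-breaking rule is fixed in advance so that $\Pi$ still returns $x^*$ whenever $x^*$ attains the maximum; in the write-up I would simply remark that $\S^*(\bx_n)\subseteq\C(x^*)$ forces $x^*$ to be (one of) the maximizer(s) of $x^\top\htheta_n$, hence the returned arm equals $x^*$.
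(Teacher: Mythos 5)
Your proof is correct and follows essentially the same route as the paper's: use $\Prob(\htheta_n \in \S^*(\bx_n)) \geq 1-\delta$, deduce $\htheta_n \in \C(x^*)$ from the set inclusion, and conclude $\Pi(\htheta_n) = x^*$ by the definition of the cone. The extra remark on tie-breaking at the cone boundary is a reasonable precaution that the paper glosses over, but it does not change the argument.
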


\textbf{Arm selection strategy.} From the previous lemma\footnote{For all the proofs in this paper, we refer the reader to the long version of the paper~\citep{tr_linear_best}.} 
it follows that the objective of an arm selection strategy is to define an allocation $\bx_n$ which leads to $\S^*(\bx_n) \subseteq \C(x^*)$ as quickly as possible.\footnote{Notice that by definition of the confidence set and since $\theta_n \rightarrow \theta^*$ as $n\rightarrow\infty$, any strategy repeatedly pulling all the arms would eventually meet the stopping condition.} Since this condition only depends on deterministic objects ($\S^*(\bx_n)$ and $\C(x^*)$), it can be computed independently from the actual reward realizations. From a geometrical point of view, this corresponds to choosing arms so that the confidence set $\S^*(\bx_n)$ shrinks into the optimal cone $\C(x^*)$ within the smallest number of pulls. To characterize this strategy we need to make explicit the form of $\S^*(\bx_n)$. 
Intuitively speaking, the more $\S^*(\bx_n)$ is ``aligned'' with the boundaries of the cone, the easier it is to shrink it into the cone. More formally, the condition $\S^*(\bx_n) \subseteq \C(x^*)$ is equivalent to
\begin{align*}
\forall x\in\X, \forall \theta\in\S^*(\bx_n), (x^*-x)^\top \theta \geq 0 \enspace\Leftrightarrow\enspace \forall y\in\Y^*, \forall \theta\in\S^*(\bx_n), y^\top(\theta^*-\theta) \leq \Delta(y).
\end{align*}
Then we can simply use Prop.~\ref{p:bound} to directly control the term $y^\top(\theta^*-\theta)$ and define
\begin{align}\label{eq:oracle.conf.set}
\S^*(\bx_n) = \left \{\theta\in\Re^d, \forall y\in\Y^*, y^\top(\theta^* -\theta) \leq c ||y||_{A_{\bx_n}^{-1}} \sqrt{\log_n(K^2/\delta)} \right\}.
\end{align}
Thus the stopping condition $\S^*(\bx_n) \subseteq \C(x^*)$ is equivalent to the condition that, for any $y\in\Y^*$,
\begin{align}\label{eq:oracle.stop.condition.2}
c ||y||_{A_{\bx_n}^{-1}} \sqrt{\log_n(K^2/\delta)} \leq \Delta(y).
\end{align}
From this condition, the oracle allocation strategy simply follows as
\begin{align}\label{eq:oracle.strategy}
\bx_n^* = \arg\min_{\bx_n} \max_{y\in\Y^*} \frac{c ||y||_{A_{\bx_n}^{-1}} \sqrt{\log_n(K^2/\delta)}}{\Delta(y)} = \arg\min_{\bx_n} \max_{y\in\Y^*}\frac{||y||_{A_{\bx_n}^{-1}}}{\Delta(y)}. 
\end{align}
%
Notice that this strategy does not return an uniformly accurate estimate of $\theta^*$ but it rather pulls arms that allow to reduce the uncertainty of the estimation of $\theta^*$ over the directions of interest (i.e., $\Y^*$) below their corresponding gaps. This implies that the objective of Eq.~\ref{eq:oracle.strategy} is to exploit the global linear assumption by pulling any arm in $\X$ that could give information about $\theta^*$ over the directions in $\Y^*$, so that directions with small gaps are better estimated than those with bigger gaps. 

\textbf{Sample complexity.} We are now ready to define the sample complexity of the oracle, which 
corresponds to the minimum number of steps needed by the allocation in Eq.~\ref{eq:oracle.strategy} to achieve the stopping condition in Eq.~\ref{eq:oracle.stop.condition.2}. 
%
From a technical point of view, it is more convenient to express the complexity of the problem in terms of the optimal design (soft allocation) instead of the discrete allocation $\bx_n$. Let $\rho^*(\lambda) =  \max_{y\in\Y^*} ||y||^2_{\Lambda_{\lambda}^{-1}} / \Delta^2(y)$ be the square of the objective function in Eq.~\ref{eq:oracle.strategy} for any design $\lambda\in\D^k$. We define the complexity of a linear best-arm identification problem as the performance achieved by the optimal design $\lambda^* = \arg\min_\lambda \rho^*(\lambda)$, i.e.
\begin{equation}\label{eq:problem.complexity}
H_{\text{LB}} = \min_{\lambda\in\D^k} \max_{y\in\Y^*} \frac{||y||^2_{\Lambda_{\lambda}^{-1}}}{\Delta^2(y)} = \rho^*(\lambda^*).
\end{equation}
This definition of complexity is less explicit than in the case of $H_{\text{MAB}}$ but it contains similar elements, notably the inverse of the gaps squared. Nonetheless, instead of summing the inverses over all the arms, $H_{\text{LB}}$ implicitly takes into consideration the correlation between the arms in the term $||y||^2_{\Lambda_{\lambda}^{-1}}$, which represents the uncertainty in the estimation of the gap between $x^*$ and $x$ (when $y=x^*-x$). As a result, from Eq.~\ref{eq:oracle.stop.condition.2} the sample complexity becomes
\begin{align}\label{eq:oracle.complexity}
N^* = c^2 H_{\text{LB}} \log_n(K^2/\delta),
\end{align}
where we use the fact that, if implemented over $n$ steps, $\lambda^*$ induces a design matrix $A_{\lambda^*} = n\Lambda_{\lambda^*}$ and $\max_{y}||y||^2_{A_{\lambda^*}^{-1}}/\Delta^2(y) = \rho^*(\lambda^*)/n$. 
Finally, we bound the range of the complexity.

\begin{lemma}\label{lem:complexity.range}
Given an arm set $\X \subseteq \Re^d$ and a parameter $\theta^*$, the complexity $H_{\text{LB}}$ (Eq.~\ref{eq:problem.complexity}) is such that
\begin{align}\label{eq:complexity.range}
\max_{y\in\Y^*}||y||^2 / (L\Delta_{\min}^2)  \leq H_{\text{LB}} \leq 4d/\Delta_{\min}^2.
\end{align}
Furthermore, if $\X$ is the canonical basis, the problem reduces to a MAB and $H_{\text{MAB}} \!\leq\! H_{\text{LB}} \!\leq\! 2H_{\text{MAB}}$.
\end{lemma}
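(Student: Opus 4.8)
The plan is to establish the three inequalities separately. Throughout I assume that $\X$ spans $\Re^d$ (otherwise one replaces $d$ by $\dim\,\mathrm{span}(\X)$ and reads every matrix on that subspace), so that $\Lambda_\lambda$ is invertible for a design $\lambda\in\D^k$ of full support. For the \emph{lower bound}, note that $\|x\|\le L$ for all arms implies $xx^\top\preceq L^2 I_d$, hence $\Lambda_\lambda=\sum_{x\in\X}\lambda(x)xx^\top\preceq L^2 I_d$ as a convex combination, i.e. $\Lambda_\lambda^{-1}\succeq L^{-2}I_d$. Therefore $\|y\|^2_{\Lambda_\lambda^{-1}}\ge\|y\|^2/L^2$ for every $y$, and so $\rho^*(\lambda)=\max_{y\in\Y^*}\|y\|^2_{\Lambda_\lambda^{-1}}/\Delta^2(y)\ge\max_{y\in\Y^*}\|y\|^2/(L^2\Delta^2(y))$ for every $\lambda$; evaluating at $\lambda^*$ gives $H_{\text{LB}}\ge\max_{y\in\Y^*}\|y\|^2/(L^2\Delta^2(y))$, which in particular is at least $\|\bar y\|^2/(L^2\Delta_{\min}^2)$ for a minimum-gap direction $\bar y$, matching the form of the stated lower bound.

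For the \emph{upper bound} $H_{\text{LB}}\le 4d/\Delta_{\min}^2$, the key is the connection to $G$-optimal design announced in the abstract. Choose $\lambda$ to be a $G$-optimal design for $\X$, i.e. a minimizer of $\lambda\mapsto\max_{x\in\X}\|x\|^2_{\Lambda_\lambda^{-1}}$; by the Kiefer--Wolfowitz equivalence theorem this minimum equals $d$. For any $y=x^*-x\in\Y^*$ the crude inequality $\|x^*-x\|^2_{\Lambda_\lambda^{-1}}\le 2\|x^*\|^2_{\Lambda_\lambda^{-1}}+2\|x\|^2_{\Lambda_\lambda^{-1}}\le 4d$ holds, and dividing by $\Delta^2(y)\ge\Delta_{\min}^2$ and maximizing over $y\in\Y^*$ yields $\rho^*(\lambda)\le 4d/\Delta_{\min}^2$; since $H_{\text{LB}}=\min_\lambda\rho^*(\lambda)$, the bound follows.

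For the \emph{MAB reduction}, take $\X=\{e_1,\dots,e_d\}$ so $K=d$; writing $\mu_i=e_i^\top\theta^*$, $i^*$ the optimal index and $\Delta_i=\mu_{i^*}-\mu_i$, we have $\Lambda_\lambda=\mathrm{diag}(\lambda_1,\dots,\lambda_d)$, $\Y^*=\{e_{i^*}-e_i\}$ with $\Delta(e_{i^*}-e_i)=\Delta_i$, and $\|e_{i^*}-e_i\|^2_{\Lambda_\lambda^{-1}}=1/\lambda_{i^*}+1/\lambda_i$, so $\rho^*(\lambda)=\max_{i\ne i^*}(1/\lambda_{i^*}+1/\lambda_i)/\Delta_i^2$. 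For the upper bound, plug in $\lambda_k=(1/\Delta_k^2)/H_{\text{MAB}}$ (with the usual convention $\Delta_{i^*}:=\Delta_{\min}$): then $1/\lambda_k=H_{\text{MAB}}\Delta_k^2$ and $(1/\lambda_{i^*}+1/\lambda_i)/\Delta_i^2=H_{\text{MAB}}(1+\Delta_{\min}^2/\Delta_i^2)\le 2H_{\text{MAB}}$, so $H_{\text{LB}}\le 2H_{\text{MAB}}$. For the lower bound, let $v=\rho^*(\lambda)$; the defining constraints give $\lambda_i\ge 1/(v\Delta_i^2)$ for every $i\ne i^*$ and, via the constraint for the second-best arm, $\lambda_{i^*}\ge 1/(v\Delta_{\min}^2)$; summing over all coordinates and using $\sum_k\lambda_k=1$ gives $1\ge H_{\text{MAB}}/v$, i.e. $\rho^*(\lambda)\ge H_{\text{MAB}}$ for every $\lambda$, hence $H_{\text{LB}}\ge H_{\text{MAB}}$.

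The main obstacle is the upper bound: one has to recognize that, after lower-bounding the gaps by $\Delta_{\min}$ and reducing $\|x^*-x\|_{\Lambda_\lambda^{-1}}$ to the single-arm norms $\|x\|_{\Lambda_\lambda^{-1}}$, what is left is precisely a $G$-optimal design problem whose optimal value is $d$ by Kiefer--Wolfowitz; the factor $4$ is then an artifact of the crude triangle-type step. Minor care is also needed for the degenerate direction $y=0\in\Y^*$ (which contributes $0$ and can be dropped) and for the case in which $\X$ does not span $\Re^d$. The remaining computations, including the whole MAB reduction, are routine.
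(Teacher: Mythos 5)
Your proof is correct and follows essentially the same route as the paper's: the Loewner bound $\Lambda_\lambda\preceq L^2 I_d$ for the lower bound (the paper does the same via diagonalization and the largest eigenvalue), the triangle-type inequality $\|x^*-x\|^2_{\Lambda_\lambda^{-1}}\le 4\max_x\|x\|^2_{\Lambda_\lambda^{-1}}$ combined with Kiefer--Wolfowitz for the upper bound, and the explicit allocation $\lambda(x)\propto 1/\Delta^2(x)$ for the MAB comparison (your summing argument for $H_{\text{MAB}}\le H_{\text{LB}}$ is a slightly cleaner way of justifying the equalization step the paper merely asserts). One remark: your lower bound comes out as $\max_{y\in\Y^*}\|y\|^2/(L^2\Delta^2(y))\ge\|\bar y\|^2/(L^2\Delta_{\min}^2)$, which is what the eigenvalue argument actually yields; the constant $L$ (rather than $L^2$) and the factor $\max_{y\in\Y^*}\|y\|^2$ appearing in the lemma's statement stem from two slips in the paper's own proof (it bounds the top eigenvalue of $\lambda(x)xx^\top$ by $\lambda(x)\|x\|_2$ instead of $\lambda(x)\|x\|_2^2$, and then replaces $\max_y\|y\|^2/\Delta(y)^2$ by $\max_y\|y\|^2/\Delta_{\min}^2$ with the inequality pointing the wrong way), so your version is the rigorous one even though it does not literally match Eq.~\ref{eq:complexity.range}.
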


The previous bounds show that $\Delta_{\min}$ plays a significant role in defining the complexity of the problem, while the specific shape of $\X$ impacts the numerator in different ways. In the worst case the full dimensionality $d$ appears (upper-bound), and more arm-set specific quantities, such as the norm of the arms $L$ and of the directions $\Y^*$, appear in the lower-bound.

%



\vspace{-0.1in}
\section{Static Allocation Strategies}\label{s:static}
\vspace{-0.1in}

\begin{wrapfigure}[13]{rh}{0.47\textwidth}
\begin{minipage}[t]{1.0\linewidth}
\vspace{-0.3in}
\begin{small}
\bookboxx{
\begin{algorithmic}
	\State \textbf{Input:} decision space $\X \in \Re^d$, confidence $\delta > 0$
	\State Set: $t = 0;~ Y = \{y =(x - x');  x\neq x' \in \X \};$ 
	\While {Eq.~\ref{eq:emp.stop.condition.2} is not true}
		\If {$G$-allocation}
			\\ \hspace{0.4cm} $x_t = \underset{x \in X}{\argmin}~\underset{x' \in X}{\max}~ x'^\top (A + xx^\top)^{-1} x' $
		\ElsIf{$\X\Y$-allocation}
			\\ \hspace{0.4cm} $x_t = \underset{x \in X}{\argmin}~\underset{y \in Y}{\max}~ y^\top (A + xx^\top)^{-1} y $
		\EndIf
		\State Update $\htheta_t=A_t^{-1}b_t$, $t=t+1$
	\EndWhile
	\State Return arm $\Pi(\htheta_t)$
\end{algorithmic}
}
\end{small}
\end{minipage}
\vspace{-0.18in}
\caption{{\small Static allocation algorithms}}
\label{alg:fixed}
\end{wrapfigure}


The oracle stopping condition (Eq.~\ref{eq:oracle.stop.condition.2}) and allocation strategy (Eq.~\ref{eq:oracle.strategy}) cannot be implemented in practice since $\theta^*$, the gaps $\Delta(y)$, and the directions $\Y^*$ are unknown. In this section we investigate how to define algorithms that only rely on the information available from $\X$ and the samples collected over time. We introduce an empirical stopping criterion and two static allocations.

\textbf{Empirical stopping criterion.} The stopping condition $\S^*(\bx_n) \subseteq \C(x^*)$ cannot be tested since $\S^*(\bx_n)$ is centered in the unknown parameter $\theta^*$ and $\C(x^*)$ depends on the unknown optimal arm $x^*$. Nonetheless, we notice that given $\X$, for each $x\in\X$ the cones $\C(x)$ can be constructed beforehand. Let $\hS(\bx_n)$ be a high-probability confidence set such that for any $\bx_n$, $\htheta_n \in \hS(\bx_n)$ and $\Prob(\theta^*\in\hS(\bx_n)) \geq 1-\delta$. Unlike $\S^*$, $\hS$ can be directly computed from samples and we can stop whenever there exists an $x$ such that $\hS(\bx_n) \subseteq \C(x)$.

\begin{lemma}\label{lem:emp.stop}
Let $\bx_n=(x_1,\ldots,x_n)$ be an arbitrary allocation sequence. If after $n$ steps there exists an arm $x\in\X$ such that $\hS(\bx_n) \subseteq \C(x)$ then $\Prob\big( \Pi(\htheta_n) \neq x^* \big) \leq \delta$. 
\end{lemma}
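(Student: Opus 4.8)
\textbf{Proof proposal for Lemma~\ref{lem:emp.stop}.} The plan is to mirror the argument behind the oracle stopping rule (Lemma~\ref{lem:oracle.stop}), but now exploiting that $\hS(\bx_n)$ is centered at the \emph{empirical} estimate $\htheta_n$ rather than at $\theta^*$, and that it captures $\theta^*$ with high probability. First I would fix the allocation $\bx_n$ and condition on the favourable event $\mathcal{E} = \{\theta^* \in \hS(\bx_n)\}$, which by assumption on $\hS$ has $\Prob(\mathcal{E}) \geq 1-\delta$. On this event, the hypothesis that there exists an arm $x$ with $\hS(\bx_n) \subseteq \C(x)$ immediately gives $\theta^* \in \C(x)$, i.e.\ $x \in \Pi(\theta^*)$; since $x^* = \arg\max_{x'} x'^\top\theta^*$ is the (assumed unique) maximizer, this forces $x = x^*$. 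Thus on $\mathcal{E}$ we have $\hS(\bx_n) \subseteq \C(x^*)$.

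The second step is to recall that $\htheta_n \in \hS(\bx_n)$ holds \emph{deterministically} (by the defining property of $\hS$). Combining this with $\hS(\bx_n) \subseteq \C(x^*)$, which we just established on $\mathcal{E}$, yields $\htheta_n \in \C(x^*)$ on $\mathcal{E}$. By definition of the cone $\C(x^*) = \{\theta : x^* \in \Pi(\theta)\}$, this means $x^* \in \Pi(\htheta_n)$, and since the algorithm returns $\Pi(\htheta_n)$ (interpreted as a single arm, with ties broken consistently so that $\Pi(\htheta_n) = x^*$), we conclude $\Pi(\htheta_n) = x^*$ on $\mathcal{E}$. Taking complements, $\Prob(\Pi(\htheta_n) \neq x^*) \leq \Prob(\mathcal{E}^c) \leq \delta$, which is the claim.

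One subtlety worth flagging is the handling of the arm $x$ appearing in the hypothesis: a priori the algorithm only knows that \emph{some} cone contains $\hS(\bx_n)$, not that it is the right one, so the argument must run through $\theta^*$ to \emph{identify} that cone as $\C(x^*)$ — this is exactly where the high-probability event $\mathcal{E}$ is used, and it is the only place randomness enters. A second minor point is that the cones $\{\C(x)\}_{x\in\X}$ partition $\Re^d$ only up to their shared boundaries, so strictly speaking one should assume $\theta^*$ lies in the interior of $\C(x^*)$ (equivalently $\Delta_{\min} > 0$, which is assumed in the preliminaries); then $\hS(\bx_n) \subseteq \C(x)$ for a proper confidence set and $\theta^* \in \hS(\bx_n)$ together force $x = x^*$ unambiguously.

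I do not anticipate a serious obstacle here: the lemma is essentially a restatement of Lemma~\ref{lem:oracle.stop} with the roles of the ``true-centered'' set $\S^*$ and the ``empirical-centered'' set $\hS$ swapped, and the proof is a short chain of set inclusions on a single good event. The only care needed is bookkeeping — making sure that the inclusion $\htheta_n \in \hS(\bx_n)$ is used as a deterministic fact while $\theta^* \in \hS(\bx_n)$ is used as the high-probability fact — but no quantitative estimate or union bound beyond the single $1-\delta$ guarantee on $\hS$ is required.
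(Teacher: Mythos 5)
Your proof is correct and follows essentially the same route as the paper's: condition on the event $\theta^*\in\hS(\bx_n)$ to identify the cone as $\C(x^*)$, then use the deterministic inclusion $\htheta_n\in\hS(\bx_n)$ to conclude $\Pi(\htheta_n)=x^*$. Your additional remarks on tie-breaking and the cone boundaries are sensible refinements of the paper's terser argument but do not change the substance.
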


\textbf{Arm selection strategy.} Similarly to the oracle algorithm, we should design an allocation strategy that guarantees that the (random) confidence set $\hS(\bx_n)$ shrinks in one of the cones $\C(x)$ within the fewest number of steps. Let $\hDelta_n(x,x') = (x-x')^\top \htheta_n$ be the empirical gap between arms $x, x'$. Then the stopping condition $\hS(\bx_n) \subseteq \C(x)$ can be written as
\begin{align}\label{eq:emp.stop.condition}
\exists x\in\X, \forall x'\in\X, &\forall \theta\in\hS(\bx_n), (x-x')^\top \theta \geq 0 \nonumber\\
&\Leftrightarrow \enspace\exists x\in\X, \forall x'\in\X, \forall \theta\in\hS(\bx_n), (x-x')^\top(\htheta_n-\theta) \leq \hDelta_n(x,x').
\end{align}
This suggests that the empirical confidence set can be defined as
\begin{align}\label{eq:emp.conf.set}
\hS(\bx_n) = \left \{\theta\in\Re^d, \forall y\in\Y, y^\top(\htheta_n -\theta) \leq c ||y||_{A_{\bx_n}^{-1}} \sqrt{\log_n(K^2/\delta)}\right\}.
\end{align}
Unlike $\S^*(\bx_n)$, $\hS(\bx_n)$ is centered in $\htheta_n$ and it considers all directions $y\in\Y$. As a result, the stopping condition in Eq.~\ref{eq:emp.stop.condition} could be reformulated as
\begin{align}\label{eq:emp.stop.condition.2}
\exists x\in\X, \forall x'\in\X,  c ||x-x'||_{A_{\bx_n}^{-1}} \sqrt{\log_n(K^2/\delta)} \leq \hDelta_n(x,x').
\end{align}
Although similar to Eq.~\ref{eq:oracle.stop.condition.2}, unfortunately this condition cannot be directly used to derive an allocation strategy. In fact, it is considerably more difficult to define a suitable allocation strategy to fit a random confidence set $\hS$ into a cone $\C(x)$ for an $x$ which is not known in advance. 
In the following we propose two allocations that try to achieve the condition in Eq.~\ref{eq:emp.stop.condition.2} as fast as possible by implementing a static arm selection strategy, while we present a more sophisticated adaptive strategy in Sect.~\ref{s:xy.adaptive}. The general structure of the static allocations in summarized in Fig.~\ref{alg:fixed}.


\textbf{$\mathbf{G}$-Allocation Strategy.}
The definition of the $G$-allocation strategy directly follows from the observation that for any pair $(x,x')\in\X^2$ we have that $||x-x'||_{A_{\bx_n}^{-1}} \leq 2\max_{x''\in\X} ||x''||_{A_{\bx_n}^{-1}}$.
This suggests that an allocation minimizing $\max_{x\in\X} ||x||_{A_{\bx_n}^{-1}}$ reduces an upper bound on the quantity tested in the stopping condition in Eq.~\ref{eq:emp.stop.condition.2}. 
Thus, for any fixed $n$, we define the $G$-allocation as
\begin{align}\label{eq:g.optimal.design}
\bx_n^G = \arg\min_{\bx_n} \max_{x\in\X} ||x||_{A_{\bx_n}^{-1}}.
\end{align}

We notice that this formulation coincides with the standard $G$-optimal design (hence the name of the allocation) defined in experimental design theory~\citep[Sect.~9.2]{pukelsheim2006optimal} to minimize the maximal mean-squared prediction error in linear regression. 
The $G$-allocation can be interpreted as the design that allows to estimate $\theta^*$ \textit{uniformly well} over all the arms in $\X$. 
Notice that the $G$-allocation in Eq.~\ref{eq:g.optimal.design} is well defined only for a fixed number of steps $n$ and it cannot be directly implemented in our case, since $n$ is unknown in advance. Therefore we have to resort to a more ``incremental'' implementation. 
In the experimental design literature a wide number of approximate solutions have been proposed to solve the $\mathit{NP}$-hard discrete optimization problem in Eq.~\ref{eq:g.optimal.design} (see 
\citep{bouhtou2010submodularity,sagnol2013approximation} for some recent results and 
\citep{tr_linear_best} for a more thorough discussion). 
For any approximate $G$-allocation strategy with performance no worse than a factor $(1+ \beta)$ of the optimal strategy $\bx_n^G$, the sample complexity $N^G$ is bounded as follows.


%
%
%
%

\begin{theorem}\label{thm:g.allocation}
If the $G$-allocation strategy is implemented with a $\beta$-approximate method 
and the stopping condition in Eq.~\ref{eq:emp.stop.condition.2} is used, then
\begin{align}\label{eq:bound.g.allocation}
\Prob\bigg[ N^G \leq \frac{16c^2 d (1+\beta)\log_n(K^2/\delta)}{\Delta_{\min}^2} \wedge \Pi(\htheta_{N^G}) = x^* \bigg] \geq 1-\delta.
\end{align}
\end{theorem}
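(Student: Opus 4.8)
The plan is to bound the sample complexity $N^G$ by showing that the $G$-allocation makes the quantity $\max_{x\in\X}\|x\|_{A_{\bx_n}^{-1}}^2$ decay like $d/n$ (up to the approximation factor $1+\beta$), and then to combine this with the stopping condition in Eq.~\ref{eq:emp.stop.condition.2} and a high-probability guarantee on the empirical gaps $\hDelta_n$.

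First I would recall the key property of the exact $G$-optimal design: by the Kiefer--Wolfowitz equivalence theorem, the optimal design $\lambda^G$ satisfies $\max_{x\in\X}\|x\|_{\Lambda_{\lambda^G}^{-1}}^2 = d$. Hence, if an allocation $\bx_n^G$ is within a factor $(1+\beta)$ of optimal, then $\max_{x\in\X}\|x\|_{A_{\bx_n^G}^{-1}}^2 \le (1+\beta)d/n$ (using $A_{\bx_n} = n\Lambda_{\lambda_{\bx_n}}$ and that the per-step normalized objective is what is being approximated). Combined with the elementary inequality $\|x-x'\|_{A^{-1}} \le 2\max_{x''}\|x''\|_{A^{-1}}$ quoted before Eq.~\ref{eq:g.optimal.design}, this gives $\max_{x,x'}\|x-x'\|_{A_{\bx_n^G}^{-1}}^2 \le 4(1+\beta)d/n$ for all $n$.

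Next I would invoke Proposition~\ref{p:bound}: on an event of probability at least $1-\delta$, simultaneously for all $n$ and all directions $y\in\Y$ we have $|y^\top(\theta^*-\htheta_n)| \le c\|y\|_{A_{\bx_n}^{-1}}\sqrt{\log_n(K^2/\delta)}$. On this event, for the direction $y = x^*-x'$ with $x'$ the true second-best arm, the empirical gap satisfies $\hDelta_n(x^*,x') \ge \Delta_{\min} - c\|x^*-x'\|_{A_{\bx_n}^{-1}}\sqrt{\log_n(K^2/\delta)}$, and more generally every relevant empirical gap is within an additive $c\|y\|_{A_{\bx_n}^{-1}}\sqrt{\log_n(K^2/\delta)}$ of its true value. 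The stopping condition Eq.~\ref{eq:emp.stop.condition.2} — that for some $x$ and all $x'$, $c\|x-x'\|_{A_{\bx_n}^{-1}}\sqrt{\log_n(K^2/\delta)} \le \hDelta_n(x,x')$ — will therefore be satisfied (for $x = x^*$) as soon as $2c\max_{x,x'}\|x-x'\|_{A_{\bx_n}^{-1}}\sqrt{\log_n(K^2/\delta)} \le \Delta_{\min}$, since the left side dominates both the confidence radius and the estimation error in the empirical gap. Plugging in the decay bound $\max_{x,x'}\|x-x'\|_{A_{\bx_n}^{-1}}^2 \le 4(1+\beta)d/n$, this holds once $n \ge 16c^2 d(1+\beta)\log_n(K^2/\delta)/\Delta_{\min}^2$, which is exactly the claimed bound on $N^G$. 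Finally, on the same event, Lemma~\ref{lem:emp.stop} guarantees that the arm returned at the stopping time is $x^*$, giving the conjunction inside the probability in Eq.~\ref{eq:bound.g.allocation}.

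The main obstacle is handling the interplay between the (slowly growing) logarithmic term $\log_n(K^2/\delta)$ and the threshold $n$ cleanly — since $\log_n$ itself depends on $n$, one must argue that the implicit inequality $n \ge 16c^2 d(1+\beta)\log_n(K^2/\delta)/\Delta_{\min}^2$ is eventually met and pin down the first such $n$; this is the standard self-bounding argument but requires care. A secondary subtlety is being precise about what "$\beta$-approximate" means for an incremental (rounding-based) implementation of the $G$-design when $n$ is not fixed in advance, and checking that the factor-$(1+\beta)$ guarantee on the normalized objective indeed transfers to $\max_x \|x\|_{A_{\bx_n}^{-1}}^2 \le (1+\beta)d/n$ uniformly in $n$ rather than only asymptotically; I would lean on the cited approximation results and the discussion in \citep{tr_linear_best} for this.
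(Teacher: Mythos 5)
Your proposal is correct and follows essentially the same route as the paper's proof: the triangle inequality $\|x-x'\|_{A^{-1}}\le 2\max_{x''}\|x''\|_{A^{-1}}$, the Kiefer--Wolfowitz bound $\rho^G\le(1+\beta)d$ for the approximate design, and Proposition~\ref{p:bound} to lower-bound the empirical gaps by $\Delta_{\min}$ minus the confidence radius, yielding the sufficient condition $16c^2(1+\beta)d\log_n(K^2/\delta)/n\le\Delta_{\min}^2$. The two caveats you flag (the self-referential $\log_n$ term and the exact sense of $\beta$-approximation) are real but are handled in the paper only by reference to the rounding lemmas in the appendix, just as you propose.
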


Notice that this result matches (up to constants) the worst-case value of $N^*$ given the upper bound on $H_{\text{LB}}$. This means that, although completely static, the $G$-allocation is already worst-case optimal.


\textbf{\boldmath{$\X\Y$}-Allocation Strategy.}\label{ss:xy.design.}
Despite being worst-case optimal, $G$-allocation is minimizing a rather loose upper bound on the quantity used to test the stopping criterion. Thus, we define an alternative static allocation that targets the stopping condition in Eq.~\ref{eq:emp.stop.condition.2} more directly by reducing its left-hand-side for any possible direction in $\Y$. For any fixed $n$, we define the $\X\Y$-allocation as
\begin{align}\label{eq:xy.optimal.design}
\bx_n^{\X\Y} = \arg\min_{\bx_n} \max_{y\in\Y} ||y||_{A_{\bx_n}^{-1}}.
\end{align}
%
$\X\Y$-allocation is based on the observation that the stopping condition in Eq.~\ref{eq:emp.stop.condition.2} requires only the empirical gaps $\widehat \Delta (x,x')$ to be well estimated, 
hence arms are pulled with the objective of increasing the accuracy of directions in $\Y$ instead of arms $\X$. This problem can be seen as a transductive variant of the $G$-optimal design~\citep{transductiveED}, where the target vectors $\Y$ are different from the vectors $\X$ used in the design. The sample complexity of the $\X\Y$-allocation is as follows.

\begin{theorem}\label{thm:xy.allocation}
If the $\X\Y$-allocation strategy is implemented with a $\beta$-approximate method 
and the stopping condition in Eq.~\ref{eq:emp.stop.condition.2} is used, then
\begin{align}\label{eq:bound.xy.allocation}
\Prob\bigg[ N^{\X\Y} \leq \frac{32c^2 d (1+\beta)\log_n(K^2/\delta)}{\Delta_{\min}^2} \wedge \Pi(\htheta_{N^{\X\Y}}) = x^* \bigg] \geq 1-\delta.
\end{align}
\end{theorem}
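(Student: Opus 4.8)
The plan is to mirror the proof of Theorem~\ref{thm:g.allocation}, simply replacing the uniform control of $\max_{x\in\X}\|x\|_{A_{\bx_n}^{-1}}$ by control of $\max_{y\in\Y}\|y\|_{A_{\bx_n}^{-1}}$, which is precisely the quantity the $\X\Y$-allocation of Eq.~\ref{eq:xy.optimal.design} is built to minimize.

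\textbf{Step 1: bounding the value of the transductive design.} First I would bound the normalized optimal value $\min_{\lambda\in\D^k}\max_{y\in\Y}\|y\|^2_{\Lambda_\lambda^{-1}}$. Evaluating the objective at the $G$-optimal design $\lambda^G$ and using $\|x-x'\|^2_{\Lambda_\lambda^{-1}}\le 2\|x\|^2_{\Lambda_\lambda^{-1}}+2\|x'\|^2_{\Lambda_\lambda^{-1}}\le 4\max_{x''\in\X}\|x''\|^2_{\Lambda_\lambda^{-1}}$ together with the Kiefer--Wolfowitz identity $\max_{x''\in\X}\|x''\|^2_{\Lambda_{\lambda^G}^{-1}}=d$, this value is at most $4d$. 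Propagating this through the $\beta$-approximate, incrementally implemented allocation run for $n$ steps (i.e.\ rounding the soft design to an integer allocation and paying the $(1+\beta)$ factor) gives a bound of the form $\max_{y\in\Y}\|y\|^2_{A_{\bx_n}^{-1}}\le 8(1+\beta)d/n$; the extra factor $2$ relative to the $G$-allocation analysis — the price of rounding/approximating a transductive rather than a classical $G$-design — is exactly what produces the constant $32$ in Eq.~\ref{eq:bound.xy.allocation} in place of the $16$ of Theorem~\ref{thm:g.allocation}.

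\textbf{Step 2: triggering the stopping condition at $x^*$.} I would then condition on the event of Proposition~\ref{p:bound}, which has probability at least $1-\delta$ and holds simultaneously for every $n$ and every direction in $\Y$. On this event, for $x=x^*$ and any $x'\in\X$, writing $y=x^*-x'$, the empirical gap obeys $\hDelta_n(x^*,x')=(x^*-x')^\top\htheta_n\ge \Delta(x^*,x')-c\|y\|_{A_{\bx_n}^{-1}}\sqrt{\log_n(K^2/\delta)}$. Hence the stopping condition Eq.~\ref{eq:emp.stop.condition.2} instantiated at $x=x^*$ is implied by $2c\|y\|_{A_{\bx_n}^{-1}}\sqrt{\log_n(K^2/\delta)}\le \Delta(x^*,x')$, which holds trivially when $x'=x^*$ (both sides vanish) and, when $x'\neq x^*$, follows from Step~1 together with $\Delta(x^*,x')\ge\Delta_{\min}$ as soon as $2c\sqrt{8(1+\beta)d/n}\,\sqrt{\log_n(K^2/\delta)}\le\Delta_{\min}$, i.e.\ (squaring) as soon as $n\ge 32c^2 d(1+\beta)\log_n(K^2/\delta)/\Delta_{\min}^2$. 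Thus on this event the algorithm stops — through the cone $\C(x^*)$ — by that step, which bounds $N^{\X\Y}$; and since on the same event $\theta^*,\htheta_n\in\hS(\bx_n)\subseteq\C(x^*)$, Lemma~\ref{lem:emp.stop} gives $\Pi(\htheta_{N^{\X\Y}})=x^*$. Both conclusions hold on a single event of probability at least $1-\delta$, which is Eq.~\ref{eq:bound.xy.allocation}.

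\textbf{Main obstacle.} The delicate step is Step~1: passing from the clean continuous bound $\min_\lambda\max_{y\in\Y}\|y\|^2_{\Lambda_\lambda^{-1}}\le 4d$ to a guarantee for the object the algorithm actually produces — a discrete, incrementally built, $\beta$-approximate allocation — since the transductive design lacks the convenient rounding properties of the classical $G$-optimal design, and this is where the constant degrades relative to Theorem~\ref{thm:g.allocation}. Making the implicit self-reference in $\log_n(\cdot)$ rigorous (exactly as in the $G$-allocation proof) is a minor additional point, while the concentration argument and the manipulation of the stopping condition in Step~2 are routine.
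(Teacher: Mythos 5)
Your proposal is correct and follows essentially the same route as the paper: the paper proves Theorem~\ref{thm:xy.allocation} by repeating the proof of Theorem~\ref{thm:g.allocation} verbatim, i.e.\ deriving a sufficient stopping condition $4c^2\rho^{\X\Y}_n\log_n(K^2/\delta)/n\le\hDelta_n^2(x^*,x)$, lower-bounding $\hDelta_n(x^*,x)$ via Prop.~\ref{p:bound}, and inserting the bound $\rho^{\X\Y}_n=O((1+\beta)d)$ from the transductive-design lemma (Lemma~\ref{lem:xy.optimal.continuous.relax}), exactly as in your Steps~1--2. Your accounting of where the extra factor of $2$ relative to Theorem~\ref{thm:g.allocation} comes from matches the paper's use of $\|x-x'\|_{A^{-1}}\le 2\max_{x''}\|x''\|_{A^{-1}}$ in that lemma, so the only divergence is bookkeeping of loose constants, not substance.
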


Although the previous bound suggests that $\X\Y$ achieves a performance comparable to the $G$-allocation, 
in fact $\X\Y$ may be arbitrarily better than $G$-allocation (for an example, see 
\citep{tr_linear_best}).



\vspace{-0.1in}
\section{\boldmath{$\X\Y$}-Adaptive Allocation Strategy}\label{s:xy.adaptive}
\vspace{-0.1in}

\begin{wrapfigure}[23]{rh}{0.56\textwidth}
\vspace{-0.25in}
\begin{minipage}[t]{1.0\linewidth}
\begin{small}
\bookboxx{
\begin{algorithmic}
	\State \textbf{Input:} decision space $\X\! \in\! \Re^d$; parameter $\alpha$; confidence $\delta$
	\State Set $j \!=\! 1;~ \hX_j\!=\! \X;~ \hY_1 \!=\! \Y;~ \rho_0 \!=\! 1;~ n_0 \!=\! d(d+1)+1$ 
	\While {$|\hX_j| > 1 $}
		\State $\rho^j = \rho^{j-1}$
		\State $t = 1; A_0=I$
		\While{$\rho^j/t \geq \alpha\rho^{j-1}(\bx_{n_{j-1}}^{j-1})/n_{j-1}$}
			\State Select arm 
 $x_t = \underset{x \in X}{\argmin}~\underset{y \in Y}{\max}~ y^\top (A + xx^\top)^{-1} y $
			\State Update $A_t = A_{t-1}+ x_t x_t^\top$, $t=t+1$ 
			\State $\rho^j = \max_{y \in \hY_j} y^\top A_t^{-1} y$
		\EndWhile
		\State Compute $b=\sum_{s=1}^{t} x_s r_s;~ \htheta_j=A_t^{-1}b$ 
		\State $\hX_{j+1} = \X$
		\For{$x\in\X$}
			\If{$\exists x'\!:\! ||x-x'||_{A_t^{-1}} \sqrt{\log_n(K^2/\delta)} \leq \hDelta_j(x',x)$}
				\State $\hX_{j+1} = \hX_{j+1}-\{x\}$
			\EndIf
		\EndFor
		\State $\hY_{j+1} = \{y =(x - x');  x, x' \in \hX_{j+1}\}$
	\EndWhile
	\State Return $\Pi(\htheta_j)$
\end{algorithmic}
}
\vspace{-0.15in}
\caption{$\X\Y$-Adaptive allocation algorithm}
\label{a:xy.adaptive}
\end{small}
\end{minipage}
\end{wrapfigure}

\textbf{Fully adaptive allocation strategies.} 
Although both $G$- and $\X\Y$-allocation are sound since they minimize upper-bounds on the quantities used by the stopping condition (Eq.~\ref{eq:emp.stop.condition.2}), they may be very suboptimal w.r.t.\ the ideal performance of the oracle introduced in Sec.~\ref{s:oracle}. Typically, an improvement can be obtained by moving to strategies adapting on the rewards observed over time. Nonetheless, as reported in Prop.~\ref{p:adaptive.bound}, whenever $\bx_n$ is not a fixed sequence, 
the bound in Eq.~\ref{eq:err.adaptive.bound} should be used. As a result, a factor $\sqrt{d}$  would appear in the definition of the confidence sets and in the stopping condition. This directly implies that the sample complexity of a fully adaptive strategy would scale linearly with the dimensionality $d$ of the problem, thus removing any advantage w.r.t. static allocations. In fact, the sample complexity of $G$- and $\X\Y$-allocation already scales linearly with $d$ and from Lem.~\ref{lem:complexity.range} we cannot expect to improve the dependency on $\Delta_{\min}$. Thus, on the one hand, we need to use the tighter bounds in Eq.~\ref{eq:err.bound} and, on the other hand, we require to be adaptive w.r.t. samples. In the sequel we propose a phased algorithm which successfully meets both requirements using a static allocation within each phase but choosing the type of allocation depending on the samples observed in previous phases.

\textbf{Algorithm.} The ideal case would be to define an empirical version of the oracle allocation in Eq.~\ref{eq:oracle.strategy} so as to adjust the accuracy of the prediction only on the directions of interest $\Y^*$ and according to their gaps $\Delta(y)$. As discussed in Sect.~\ref{s:static} this cannot be obtained by a direct adaptation of Eq.~\ref{eq:emp.stop.condition.2}. 
In the following, we describe a safe alternative to adjust the allocation strategy to the gaps.
\begin{lemma}\label{lem:dominated.arm}
Let $\bx_n$ be a fixed allocation sequence and $\htheta_n$ its corresponding estimate for $\theta^*$.  If  an arm $x \in \X$ is such that 
\begin{equation}\label{eq:discard.arm} 
\exists x'\in\X~\text{s.t.}~ c||x'-x||_{A_{\bx_n}^{-1}}\sqrt{\log_n(K^2/\delta)} < \hDelta_n(x',x),
\end{equation}
then arm $x$ is sub-optimal. Moreover, if 
Eq.~\ref{eq:discard.arm} is true, we say that $x'$ dominates $x$.
\end{lemma}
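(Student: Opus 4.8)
The plan is to show that the hypothesis of Eq.~\ref{eq:discard.arm} forces $\Delta(x',x) = (x'-x)^\top\theta^* > 0$, which by definition means $x^\top\theta^* < x'^\top\theta^* \leq x^{*\top}\theta^*$, so $x$ cannot be optimal. The starting point is Proposition~\ref{p:bound}, which applies because $\bx_n$ is a \emph{fixed} allocation sequence: on an event of probability at least $1-\delta$, we have $|y^\top\theta^* - y^\top\htheta_n| \leq c\|y\|_{A_{\bx_n}^{-1}}\sqrt{\log_n(K^2/\delta)}$ simultaneously for all directions $y\in\Y$ (here I would invoke the footnote to Prop.~\ref{p:bound} about the union bound over directions so the logarithmic term is uniform). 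Apply this with $y = x'-x$.

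The core computation is then a one-line triangle-inequality argument. On the good event,
\begin{align*}
\Delta(x',x) = (x'-x)^\top\theta^* &= (x'-x)^\top\htheta_n + (x'-x)^\top(\theta^* - \htheta_n) \\
&\geq \hDelta_n(x',x) - c\|x'-x\|_{A_{\bx_n}^{-1}}\sqrt{\log_n(K^2/\delta)}.
\end{align*}
By the hypothesis Eq.~\ref{eq:discard.arm}, the right-hand side is strictly positive, hence $\Delta(x',x) > 0$, i.e. $x'^\top\theta^* > x^\top\theta^*$. Since $x^{*\top}\theta^* \geq x'^\top\theta^*$ by optimality of $x^*$, we conclude $x^\top\theta^* < x^{*\top}\theta^*$, so $x\neq x^*$ and $x$ is suboptimal. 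The claim that $x'$ dominates $x$ is then just a restatement, recording that the witnessing arm in Eq.~\ref{eq:discard.arm} has strictly larger mean reward.

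There is no real obstacle here; the only point requiring care is the quantifier on the confidence event. Because the lemma is stated without an explicit ``with probability $1-\delta$'' qualifier inside its conclusion but the surrounding algorithm applies it per phase, I would state the conclusion as holding on the event of Prop.~\ref{p:bound} (probability $\geq 1-\delta$), and note that this is exactly the high-probability event on which the overall stopping guarantee (Lemma~\ref{lem:emp.stop}) is built, so no additional probability budget is consumed. The use of the \emph{fixed}-sequence bound (Eq.~\ref{eq:err.bound}) rather than the adaptive one (Eq.~\ref{eq:err.adaptive.bound}) is essential and is precisely why the lemma hypothesizes ``$\bx_n$ a fixed allocation sequence'' — this is the reason the phased algorithm must keep the allocation static within each phase, and I would flag that connection in the proof.
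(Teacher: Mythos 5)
Your proof is correct and is essentially the argument the paper gives: both apply the fixed-sequence concentration bound of Prop.~\ref{p:bound} to the single direction $y=x'-x$ (the paper phrases this via the confidence set $\hS(\bx_n)$ of Eq.~\ref{eq:emp.conf.set}, which is defined by exactly that inequality) and combine it with Eq.~\ref{eq:discard.arm} to conclude $(x'-x)^\top\theta^*>0$. Your remarks on the probability event and on why the fixed-sequence bound is needed match the paper's intended reading, so nothing further is required.
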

Lem.~\ref{lem:dominated.arm} allows to easily construct the set of potentially optimal arms, denoted $\hX(\bx_n)$, by removing from $\X$ all the dominated arms. As a result, we can replace the stopping condition in Eq.~\ref{eq:emp.stop.condition.2}, by just testing whether the number of non-dominated arms $|\hX(\bx_n)|$ is equal to 1, which corresponds to the case where the confidence set is fully contained into a single cone. Using $\hX(\bx_n)$, we construct $\hY(\bx_n) = \{y=x-x'; x,x'\in\hX(\bx_n)\}$, the set of directions along which the estimation of $\theta^*$ needs to be improved to further shrink $\hS(\bx_n)$ into a single cone and trigger the stopping condition. Note that if $\bx_n$ was an adaptive strategy, then we could not use Lem.~\ref{lem:dominated.arm} to discard arms but we should rely on the bound in Prop.~\ref{p:adaptive.bound}. 
To avoid this problem,  an effective solution is to run the algorithm through phases. Let $j\in\mathbb{N}$ be the index of a phase and $n_j$ its corresponding length. We denote by $\hX_j$ the set of non-dominated arms constructed on the basis of the samples collected in the phase $j-1$. This set is used to identify the directions $\hY_j$ and to define a \textit{static} allocation which focuses on reducing the uncertainty of $\theta^*$ along the directions in $\hY_j$. Formally, in phase $j$ we implement the allocation
\begin{align}\label{eq:phase.allocation}
\bx_{n_j}^j = \arg\min_{\bx_{n_j}} \max_{y\in\hY_j} ||y||_{A_{\bx_{n_j}}^{-1}},
\end{align}
which coincides with a $\X\Y$-allocation (see Eq.~\ref{eq:xy.optimal.design}) but restricted on $\hY_j$. Notice that $\bx_{n_j}^j$ may still use any arm in $\X$ which could be useful in reducing the confidence set along any of the directions in $\hY_j$. Once phase $j$ is over, the OLS estimate $\htheta^j$ is computed using the rewards observed within phase $j$ and then is used to test the stopping condition in Eq.~\ref{eq:emp.stop.condition.2}. Whenever the stopping condition does not hold, a new set $\hX_{j+1}$ is constructed using the discarding condition in Lem.~\ref{lem:dominated.arm} and a new phase is started. Notice that through this process, at each phase $j$, the allocation $\bx_{n_j}^j$ is static conditioned on the previous allocations and the use of the bound from Prop.~\ref{p:bound} is still correct. 

A crucial aspect of this algorithm is the length of the phases $n_j$. On the one hand, short phases allow a high rate of adaptivity, since $\hX_j$ is recomputed very often. On the other hand, if a phase is too short, it is very unlikely that the estimate $\htheta^j$ may be accurate enough to actually discard any arm. 
An effective way to define the length of a phase in a deterministic way is to relate it to the actual uncertainty of the allocation in estimating the value of all the active directions in $\hY_j$. In phase $j$, let $\rho^j(\lambda) = \max_{y\in\hY_j} ||y||^2_{\Lambda_{\lambda}^{-1}}$, then given a parameter $\alpha\in(0,1)$, we define
\begin{align}\label{eq:phase.length}
n_j = \min\big\{n\in\mathbb{N}: \rho^j(\lambda_{\bx^j_n}) / n \leq \alpha \rho^{j-1}(\lambda^{j-1}) / n_{j-1}\big\},
\end{align}
where $\bx^j_n$ is the allocation defined in Eq.~\ref{eq:phase.allocation} and $\lambda^{j-1}$ is the design corresponding to $\bx_{n_{j-1}}^{j-1}$, the allocation performed at phase $j-1$. In words, $n_j$ is the minimum number of steps needed by the $\X\Y$-adaptive allocation to achieve an uncertainty over all the directions of interest which is a fraction $\alpha$ of the performance obtained in the previous iteration. Notice that given $\hY_j$ and $\rho^{j-1}$ this quantity can be computed before the actual beginning of phase $j$.
The resulting algorithm using the $\X\Y$-Adaptive allocation strategy is summarized in Fig.~\ref{a:xy.adaptive}.

\textbf{Sample complexity.} Although the $\X\Y$-Adaptive allocation strategy is designed to approach the oracle sample complexity $N^*$, in early phases it basically implements a $\X\Y$-allocation and no significant improvement can be expected until some directions are discarded from $\hY$. At that point, $\X\Y$-adaptive starts focusing on directions which only contain near-optimal arms and it starts approaching the behavior of the oracle. As a result, in studying the sample complexity of $\X\Y$-Adaptive we have to take into consideration the unavoidable price 
of discarding ``suboptimal'' directions. 
This cost is directly related to the geometry of the arm space that influences the number of samples needed before arms can be discarded from $\X$. 
To take into account this problem-dependent quantity, we 
introduce a slightly relaxed definition of complexity. 
More precisely, we define the number of steps needed to discard all the directions which do not contain $x^*$, i.e. $\Y-\Y^*$. 
From a geometrical point of view, this corresponds to the case when for any pair of suboptimal arms $(x,x')$, the confidence set $\S^*(\bx_n)$ does not intersect the hyperplane separating the cones $\C(x)$ and $\C(x')$. Fig.~\ref{f:cones} offers a simple illustration for such a situation: $\S^*$ no longer intercepts the border line between $\C(x_2)$ and $\C(x_3)$, which implies that direction $x_2 - x_3$ can be discarded.
More formally, the hyperplane containing parameters $\theta$ for which $x$ and $x'$ are equivalent is simply $\C(x)\cap \C(x')$ and the quantity
\begin{align}\label{eq:m.value}
M^* = \min\{n\in\mathbb{N}, \forall x\neq x^*, \forall x'\neq x^*, \S^*(\bx_n^{\X\Y}) \cap (\C(x) \cap \C(x')) = \emptyset\}
\end{align}
corresponds to the minimum number of steps needed by the static $\X\Y$-allocation strategy to discard all the \textit{suboptimal} directions. This term together with the oracle complexity $N^*$ characterizes the sample complexity of the phases of the $\X\Y$-adaptive allocation. In fact, the length of the phases is such that either they correspond to the complexity of the oracle or they can never last more than the steps needed to discard all the sub-optimal directions. As a result, the overall sample complexity of the $\X\Y$-adaptive algorithm is bounded as in the following theorem.


\begin{theorem}\label{thm:xy.adaptive}
If the $\X\Y$-Adaptive allocation strategy is implemented with a $\beta$-approximate method and the stopping condition in Eq.~\ref{eq:emp.stop.condition.2} is used, then
\begin{align}\label{eq:bound.xy.adaptive.allocation}
\Prob\bigg[ N \leq \frac{(1+\beta)\max\{M^*,\frac{16}{\alpha}N^*\}}{\log(1/\alpha)} \log\Big(\frac{c\sqrt{\log_n(K^2/\delta)}}{\Delta_{\min}}\Big)  \wedge \Pi(\htheta_{N}) = x^* \bigg] \geq 1-\delta.
\end{align}
\end{theorem}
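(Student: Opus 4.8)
My plan is to work on a single high‑probability event and to bound the total number of pulls $N=\sum_{j\ge 1}n_j$ by analysing the phases one at a time. The key structural observation is that inside phase $j$ the allocation $\bx^j_{n_j}$ is a \emph{fixed} sequence conditionally on the samples of phases $1,\dots,j-1$, so Proposition~\ref{p:bound} applies to $\htheta^j$ \emph{without} the extra $\sqrt d$ of Proposition~\ref{p:adaptive.bound}. Taking a union bound of that event over all phases inflates only the logarithmic factor; since the number of phases will turn out to be $O(\log(1/\Delta_{\min}))$, this is absorbed into $\log_n(K^2/\delta)$ by the convention of the footnote of Prop.~\ref{p:bound}. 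On this event, Lemma~\ref{lem:dominated.arm} guarantees that every arm removed at the end of a phase is genuinely suboptimal, hence $x^*\in\hX_j$ for all $j$; moreover the empirical maximiser $\Pi(\htheta^j)$ can never meet the discarding test of Eq.~\ref{eq:discard.arm} (its empirical gaps against all other arms are non‑positive), so as soon as a single arm survives it must be $x^*=\Pi(\htheta^j)$, which by Lemma~\ref{lem:emp.stop} is returned. This settles the event $\{\Pi(\htheta_N)=x^*\}$; it remains to control $N$.

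\textbf{Geometric decay and the number of phases.} For phase $j$ let $\gamma_j=\rho^j(\lambda^j)/n_j=\max_{y\in\hY_j}\|y\|^2_{A_{\bx^j_{n_j}}^{-1}}$ be the uncertainty over $\hY_j$ realised at the end of the phase, where $\lambda^j$ is the design of $\bx^j_{n_j}$. By the very definition of $n_j$ in Eq.~\ref{eq:phase.length} one has $\gamma_j\le\alpha\gamma_{j-1}$, hence $\gamma_j\le\alpha^j\gamma_0$. Two consequences follow on the good event. (i) If a suboptimal arm $x$ survives phase $j-1$, the discarding test with $x'=x^*$ must fail, so $c\|x^*-x\|_{A^{-1}}\sqrt{\log_n(K^2/\delta)}\ge\hDelta_{n_{j-1}}(x^*,x)\ge\Delta(x)-c\|x^*-x\|_{A^{-1}}\sqrt{\log_n(K^2/\delta)}$; together with $\|x^*-x\|^2_{A^{-1}}\le\gamma_{j-1}$ (since $x^*-x\in\hY_{j-1}$) this yields $\Delta(x)\le 2c\sqrt{\gamma_{j-1}\log_n(K^2/\delta)}=:\eps_{j-1}$, i.e.\ every surviving suboptimal arm has a gap of order $\sqrt{\gamma_{j-1}}$. (ii) As soon as $\eps_{j-1}<\Delta_{\min}$, point (i) forces $\hX_j=\{x^*\}$ and the algorithm stops; since $\gamma_j\le\alpha^j\gamma_0$ this happens after at most $J=O\!\big(\tfrac{1}{\log(1/\alpha)}\log\tfrac{c\sqrt{\log_n(K^2/\delta)}}{\Delta_{\min}}\big)$ phases, which is precisely the outer factor of the claimed bound.

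\textbf{Length of a single phase.} Since $\bx^j_n$ is a $\beta$‑approximate $\X\Y$‑design restricted to $\hY_j$, after $n$ steps $\max_{y\in\hY_j}\|y\|^2_{A^{-1}}\le(1+\beta)\rho^j_*/n$ with $\rho^j_*:=\min_\lambda\max_{y\in\hY_j}\|y\|^2_{\Lambda_\lambda^{-1}}$ (the reset $A_0=I$ only helps), so by minimality of $n_j$, $n_j\le(1+\beta)\rho^j_*/(\alpha\gamma_{j-1})+1$. I then split $\hY_j=\hY^*_j\cup\hY^{\mathrm{sub}}_j$, with $\hY^*_j=\{x^*-x:x\in\hX_j\}\subseteq\Y^*$ and $\hY^{\mathrm{sub}}_j=\{x-x':x,x'\in\hX_j\setminus\{x^*\}\}\subseteq\Y\setminus\Y^*$, so that $\rho^j_*\le 2\max\{\rho^*_j,\rho^{\mathrm{sub}}_j\}$ (averaging the two restricted optimal designs). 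For the first piece, evaluating at the oracle design $\lambda^*$ of Eq.~\ref{eq:problem.complexity} and using (i): for each surviving $x\ne x^*$, $\|x^*-x\|^2_{\Lambda_{\lambda^*}^{-1}}=\big(\|x^*-x\|^2_{\Lambda_{\lambda^*}^{-1}}/\Delta^2(x)\big)\Delta^2(x)\le H_{\text{LB}}\Delta^2(x)\le H_{\text{LB}}\eps_{j-1}^2$, and since $\eps_{j-1}^2=4c^2\gamma_{j-1}\log_n(K^2/\delta)$ the $\gamma_{j-1}$ cancels, giving $\rho^*_j/\gamma_{j-1}\le 4c^2H_{\text{LB}}\log_n(K^2/\delta)=4N^*$. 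For the second piece I invoke the definition of $M^*$ (Eq.~\ref{eq:m.value}): after $M^*$ pulls of the static $\X\Y$‑allocation, $\S^*$ no longer meets any face $\C(x)\cap\C(x')$ separating two suboptimal arms, and via the decomposition $x-x'=(x^*-x')-(x^*-x)$ and Prop.~\ref{p:bound} this translates into an upper bound on $\|x-x'\|_{A^{-1}}$; since $\hY^{\mathrm{sub}}_j\subseteq\Y$, the $\hY_j$‑restricted design attains the same accuracy at least as fast, giving $\rho^{\mathrm{sub}}_j/\gamma_{j-1}=O(M^*)$ (up to the $1/\alpha$ buffer and absolute constants). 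Combining, $n_j\le(1+\beta)\max\{M^*,\tfrac{16}{\alpha}N^*\}$, and summing over the at most $J$ phases gives the bound, with the union bound over phases keeping the failure probability at $\delta$.

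\textbf{Main obstacle.} The delicate step is the $M^*$‑control of $\rho^{\mathrm{sub}}_j$: one must convert the purely geometric quantity $M^*$ — an emptiness‑of‑intersection statement about $\S^*$ and the cone faces, defined for a \emph{single} static $\X\Y$ run started from scratch — into an algebraic upper bound on $\min_\lambda\max_{y\in\hY^{\mathrm{sub}}_j}\|y\|^2_{\Lambda_\lambda^{-1}}$ at the scale $\gamma_{j-1}$, while tracking the $\beta$‑approximation, the per‑phase reset $A_0=I$, and the degenerate pairs with $\Delta(x)=\Delta(x')$ that $M^*$ accounts for only through the geometry rather than through the gaps. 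A milder nuisance is making the union bound over the a priori unknown number of phases rigorous without contaminating the leading constants; the standard fix of spending $\delta/2^j$ in phase $j$ changes the logarithmic factor by at most a constant.
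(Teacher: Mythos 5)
Your overall architecture (per-phase static allocations so that Prop.~\ref{p:bound} applies, geometric decay of the uncertainty giving $J=O\big(\log(c\sqrt{\log_n(K^2/\delta)}/\Delta_{\min})/\log(1/\alpha)\big)$ phases, and a per-phase length bounded by $\max\{M^*,\tfrac{16}{\alpha}N^*\}$) matches the paper, and your treatment of the oracle piece --- using $\Delta(y)\leq 2c\sqrt{\gamma_{j-1}\log_n(K^2/\delta)}$ for surviving directions so that the $\gamma_{j-1}$ cancels and $\rho^*_j/\gamma_{j-1}\leq 4N^*$ --- is essentially the computation in the paper's Case~2. The gap is exactly where you flag it: the claim $\rho^{\mathrm{sub}}_j/\gamma_{j-1}=O(M^*)$ is asserted, not proved, and as a \emph{per-phase} statement it is false without an additional ingredient. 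The quantity $\rho^{\mathrm{sub}}_j=\min_\lambda\max_{y\in\hY^{\mathrm{sub}}_j}\|y\|^2_{\Lambda_\lambda^{-1}}$ does not shrink with $j$ (it is bounded below by a problem-dependent constant as long as $\hY^{\mathrm{sub}}_j\neq\emptyset$), whereas $\gamma_{j-1}\leq\alpha^{j-1}$ decays geometrically; so the ratio blows up in late phases \emph{unless} one first shows that no suboptimal--suboptimal direction can survive once the uncertainty drops below a threshold. That threshold is precisely the geometric content of $M^*$, and establishing the link between ``$\S^*$ misses every face $\C(x)\cap\C(x')$'' and ``the arm-discarding rule of Lem.~\ref{lem:dominated.arm} has eliminated $x$ or $x'$'' is the step your decomposition leaves open. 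Your fallback (``the restricted design is at least as fast as the full $\X\Y$ design'') only yields $\rho^{\mathrm{sub}}_j\leq\rho^{\X\Y}(\lambda^{\X\Y})$, which does not control the ratio.

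The paper avoids this by never decomposing the design problem. It introduces $\epsilon^*$, the largest confidence width at which $\S^*$ still touches some suboptimal face, so that $M^*$ steps of static $\X\Y$ give $c\sqrt{\rho^{\X\Y}_{M^*}\log_n(K^2/\delta)/M^*}<\epsilon^*$, and then splits on whether the phase-$j$ uncertainty $\sqrt{\rho^j_{n_j}/n_j}$ is above or below $\epsilon^*/(c\sqrt{\log_n(K^2/\delta)})$. Above: since $\hY_j\subseteq\Y$ implies $\rho^j_n\leq\rho^{\X\Y}_n$ and $\rho^j_n/n$ is non-increasing, $n_j>M^*$ would contradict the threshold, so $n_j\leq M^*$ with no need to quantify how fast the suboptimal directions are estimated. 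Below: all of $\Y\setminus\Y^*$ has already been discarded, $\hY_j\subseteq\Y^*$, and only your oracle computation is needed, giving $n_j\leq\tfrac{16}{\alpha}N^*$ after accounting for the rounding loss via Lemma~\ref{lem:allocation.vs.design} and $n_j\geq d(d+1)+1$. I would recommend replacing your two-piece design bound by this dichotomy; the rest of your argument then goes through as written.
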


We first remark that, unlike $G$ and $\X\Y$, the sample complexity of $\X\Y$-Adaptive does not have any direct dependency on $d$ and $\Delta_{\min}$ (except in the logarithmic term) but it rather scales with the oracle complexity $N^*$ and the cost of discarding suboptimal directions $M^*$. Although this additional cost is probably unavoidable, one may have expected that $\X\Y$-Adaptive may need to discard all the suboptimal directions before performing as well as the oracle, thus having a sample complexity of $O(M^*+N^*)$. Instead, we notice that $N$ scales with the \textit{maximum} of $M^*$ and $N^*$, thus implying that $\X\Y$-Adaptive may actually catch up with the performance of the oracle (with only a multiplicative factor of $16/\alpha$) whenever discarding suboptimal directions is less expensive than actually identifying the best arm. 



\vspace{-0.1in}
\section{Numerical Simulations}\label{s:experiments}
\vspace{-0.1in}

We illustrate the performance of $\X\Y$-Adaptive and compare it to the $\X\Y$-Oracle strategy (Eq.~\ref{eq:oracle.strategy}), the static allocations $\X\Y$ and $G$, as well as with the fully-adaptive version of $\X\Y$ where $\hX$ is updated at each round and the bound from Prop.\ref{p:adaptive.bound} is used. For a fixed confidence $\delta = 0.05$, we compare the sampling budget needed to identify the best arm with probability at least $1-\delta$.
%
%
We consider a set of arms $\X \in \mathbb{R}^d$, with $|\X|=d+1$ including the canonical basis ($e_1,\ldots,e_d$) and an additional arm $x_{\text{d+1}} = [\cos(\omega)~~\sin(\omega)~~0~~\dots~~0]^\top$.
%
%
We choose $\theta^* = [2~~0~~0~~\dots~~0]^\top$, and fix $\omega = 0.01$, so that $\Delta_{\min} = (x_1 - x_{\text{d+1}})^\top \theta^*$ is much smaller than the other gaps. In this setting, an efficient sampling strategy should focus on reducing the uncertainty in the direction $\tilde y = (x_1 - x_{\text{d+1}})$ by pulling the arm $x_2=e_2$ which is almost aligned with $\tilde y$. In fact, from the rewards obtained from $x_2$ it is easier to decrease the uncertainty about the second component of $\theta^*$, that is precisely the dimension which allows to discriminate between $x_1$ and $x_{\text{d+1}}$. Also, we fix 
$\alpha= 1/10$, and the noise $\eps \sim \mathcal{N}(0,1)$. Each phase begins with an initialization matrix $A_0$, obtained by pulling once each canonical arm. 
In Fig.~\ref{f:budget} we report the sampling budget of the algorithms, averaged over 100 runs, for $d=2\dots10$.

\begin{wrapfigure}[15]{rh}{0.45\textwidth}
\vspace{-0.15in}
\centering
\includegraphics[scale = 0.4]{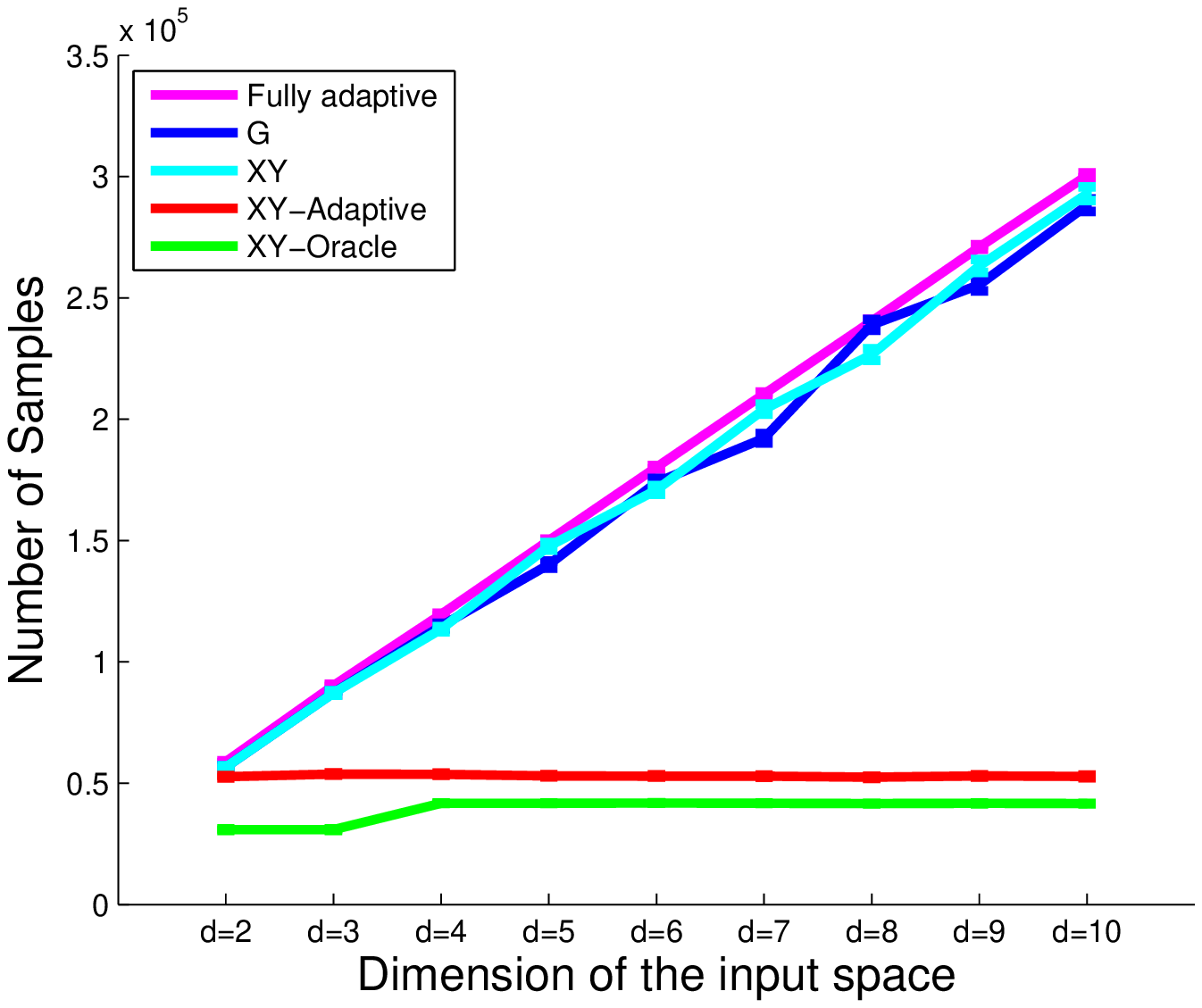}
\vspace{-0.15in}
\caption{{\small The sampling budget needed to identify the best arm, when the dimension grows from $\mathbb R^2$ to $\mathbb R^{10}.$}}
\label{f:budget}
\end{wrapfigure}

\textbf{The results.} The numerical results show that $\X\Y$-Adaptive is effective in allocating the samples to shrink the uncertainty in the direction $\tilde y$. Indeed, $\X\Y$-adaptive identifies the most important direction after few phases and is able to perform an allocation which mimics that of the oracle. On the contrary, $\X\Y$ and $G$ do not adjust to the empirical gaps and consider all directions as equally important. This behavior forces $\X\Y$ and $G$ to allocate samples until the uncertainty is smaller than $\Delta_\text{min}$ in all directions. Even though the Fully-adaptive algorithm also identifies the most informative direction rapidly, the $\sqrt d$ term in the bound delays the discarding of the arms and prevents the algorithm from gaining any advantage compared to $\X\Y$ and $G$. As shown in Fig.~\ref{f:budget}, the difference between the budget of $\X\Y$-Adaptive and the static strategies increases with the number of dimensions. In fact, while additional dimensions have little to no impact on $\X\Y$-Oracle and $\X\Y$-Adaptive (the only important direction remains $\tilde y$ independently from the number of unknown features of $\theta^*$), for the static allocations more dimensions imply more directions to be considered and 
more features of $\theta^*$ to be estimated uniformly well until the uncertainty falls below $\Delta_{\min}$.

%


\vspace{-0.1in}
\section{Conclusions}\label{s:conclusions}
\vspace{-0.1in}

In this paper we studied the problem of best-arm identification  with a fixed confidence, in the linear bandit setting. First we offered a preliminary characterization of the problem-dependent complexity of the best arm identification task and shown its connection with the complexity in the MAB setting. Then, we designed and analyzed efficient sampling strategies for this problem. The $G$-allocation strategy allowed us to point out a close connection with optimal experimental design techniques, and in particular to the G-optimality criterion. Through the second proposed strategy, $\X\Y$-allocation, we  introduced a novel optimal design problem where the testing arms do not coincide with the arms chosen in the design. Lastly, we pointed out the limits that a fully-adaptive allocation strategy might have in the linear bandit setting and proposed a phased-algorithm, $\X\Y$-Adaptive, that learns from previous observations, without suffering from the dimensionality of the problem.
Since this is one of the first works that analyze pure-exploration problems in the linear-bandit setting, it opens the way for an important number of similar problems already studied in the MAB setting. For instance, we can investigate strategies to identify the best-linear arm when having a limited budget or study the best-arm identification when the set of arms is very large (or infinite). Some interesting extensions also emerge from the optimal experimental design literature, such as the study of sampling strategies for meeting the G-optimality criterion when the noise is heterosckedastic, or the design of efficient strategies for satisfying other related optimality criteria, such as V-optimality. 

\vspace{-0.1in}
\paragraph*{Acknowledgments} This work was supported by the French Ministry of Higher Education and Research, Nord-Pas de Calais Regional Council and FEDER through the “Contrat de Projets Etat Region 2007–2013", and European Community’s Seventh Framework Programme under grant agreement no 270327 (project CompLACS).

\bibliographystyle{plain}

\begin{thebibliography}{10}

\bibitem{AbPaSz11}
Yasin Abbasi-Yadkori, D{\'a}vid P{\'a}l, and Csaba Szepesv{\'a}ri.
\newblock Improved algorithms for linear stochastic bandits.
\newblock In {\em Proceedings of the 25th Annual Conference on Neural
  Information Processing Systems (NIPS)}, 2011.

\bibitem{audibert2010best}
Jean-Yves Audibert, S\'ebastien Bubeck, and R\'emi Munos.
\newblock Best arm identification in multi-armed bandits.
\newblock In {\em Proceedings of the 23rd Conference on Learning Theory
  (COLT)}, 2010.

\bibitem{Auer02}
Peter Auer.
\newblock Using confidence bounds for exploitation-exploration trade-offs.
\newblock {\em Journal of Machine Learning Research}, 3:397--422, 2002.

\bibitem{bouhtou2010submodularity}
Mustapha Bouhtou, Stephane Gaubert, and Guillaume Sagnol.
\newblock Submodularity and randomized rounding techniques for optimal
  experimental design.
\newblock {\em Electronic Notes in Discrete Mathematics}, 36:679--686, 2010.

\bibitem{pure-exploration}
S{\'e}bastien Bubeck, R{\'e}mi Munos, and Gilles Stoltz.
\newblock Pure exploration in multi-armed bandits problems.
\newblock In {\em Proceedings of the 20th International Conference on
  Algorithmic Learning Theory (ALT)}, 2009.

\bibitem{bubeck2013multiple}
S{\'e}bastien Bubeck, Tengyao Wang, and Nitin Viswanathan.
\newblock Multiple identifications in multi-armed bandits.
\newblock In {\em Proceedings of the International Conference in Machine
  Learning (ICML)}, pages 258--265, 2013.

\bibitem{action-elimination}
Eyal Even-Dar, Shie Mannor, and Yishay Mansour.
\newblock Action elimination and stopping conditions for the multi-armed bandit
  and reinforcement learning problems.
\newblock {\em J. Mach. Learn. Res.}, 7:1079--1105, December 2006.

\bibitem{gabillon2012best}
Victor Gabillon, Mohammad Ghavamzadeh, and Alessandro Lazaric.
\newblock Best arm identification: A unified approach to fixed budget and fixed
  confidence.
\newblock In {\em Proceedings of the 26th Annual Conference on Neural
  Information Processing Systems (NIPS)}, 2012.

\bibitem{gabillon2011multi-bandit}
Victor Gabillon, Mohammad Ghavamzadeh, Alessandro Lazaric, and S\'ebastien
  Bubeck.
\newblock Multi-bandit best arm identification.
\newblock In {\em Proceedings of the 25th Annual Conference on Neural
  Information Processing Systems (NIPS)}, pages 2222--2230, 2011.

\bibitem{HoffmanSF14}
Matthew~D. Hoffman, Bobak Shahriari, and Nando de~Freitas.
\newblock On correlation and budget constraints in model-based bandit
  optimization with application to automatic machine learning.
\newblock In {\em Proceedings of the 17th International Conference on
  Artificial Intelligence and Statistics (AISTATS)}, pages 365--374, 2014.

\bibitem{jamieson2013lil-ucb}
Kevin~G. Jamieson, Matthew Malloy, Robert Nowak, and Sébastien Bubeck.
\newblock lil' {UCB} : An optimal exploration algorithm for multi-armed
  bandits.
\newblock In {\em Proceeding of the 27th Conference on Learning Theory (COLT)},
  2014.

\bibitem{kaufmann2013information}
Emilie Kaufmann and Shivaram Kalyanakrishnan.
\newblock Information complexity in bandit subset selection.
\newblock In {\em Proceedings of the 26th Conference on Learning Theory
  (COLT)}, pages 228--251, 2013.

\bibitem{kiewol60}
Jack Kiefer and Jacob Wolfowitz.
\newblock The equivalence of two extremum problems.
\newblock {\em Canadian Journal of Mathematics}, 12:363--366, 1960.

\bibitem{LinUCB}
Lihong Li, Wei Chu, John Langford, and Robert~E. Schapire.
\newblock A contextual-bandit approach to personalized news article
  recommendation.
\newblock In {\em Proceedings of the 19th International Conference on World
  Wide Web (WWW)}, pages 661--670, 2010.

\bibitem{pukelsheim2006optimal}
Friedrich Pukelsheim.
\newblock {\em Optimal Design of Experiments}.
\newblock Classics in Applied Mathematics. Society for Industrial and Applied
  Mathematics, 2006.

\bibitem{Robbins52}
Herbert Robbins.
\newblock Some aspects of the sequential design of experiments.
\newblock {\em Bulletin of the American Mathematical Society}, pages 527--535,
  1952.

\bibitem{sagnol2013approximation}
Guillaume Sagnol.
\newblock Approximation of a maximum-submodular-coverage problem involving
  spectral functions, with application to experimental designs.
\newblock {\em Discrete Appl. Math.}, 161(1-2):258--276, January 2013.

\bibitem{tr_linear_best}
Marta Soare, Alessandro Lazaric, and R\'emi Munos.
\newblock Best-{A}rm {I}dentification in {L}inear {B}andits. {T}echnical
  report,~http://arxiv.org/abs/1409.6110.

\bibitem{transductiveED}
Kai Yu, Jinbo Bi, and Volker Tresp.
\newblock Active learning via transductive experimental design.
\newblock In {\em Proceedings of the 23rd International Conference on Machine
  Learning (ICML)}, pages 1081--1088, 2006.

\end{thebibliography}

\newpage



\appendix


\vspace{-0.1in}
\section{Comparison between $\mathbf{G}$-allocation and \boldmath{$\X\Y$}-allocation}\label{s:g.vs.xy.allocation}
\vspace{-0.1in}

We define two examples illustrating the difference between the $G$ and the $\X\Y$ allocation strategies. Let us consider a problem with $\X\subset \Re^2$ and arms $x_1=[1~~ \epsilon/2]^{\top}$ and $x_2=[1~~ -\epsilon/2]^\top$, where $\epsilon \in (0,1)$. 
In this case, both static allocations pull the two arms the same number of times, thus inducing an optimal design $\lambda(x_1)=\lambda(x_2)=1/2$. We want to study the (asymptotic) performance of the allocation according to the different definition of error $\max_{x\in\X} x^{\top} \Lambda_{\lambda}^{-1} x$ and $\max_{y\in\Y} y^{\top} \Lambda_{\lambda}^{-1} y$ used by $G$ and $\X\Y$-allocation respectively. We first notice that
\begin{align*}
\Lambda_{\lambda} = \frac{1}{2}
 \left[ \begin{array}{cc}
1 & \epsilon/2 \\
\epsilon/2 & \epsilon^2/4 \end{array} \right] +
\frac{1}{2}
 \left[ \begin{array}{cc}
1 & -\epsilon/2 \\
-\epsilon/2 & \epsilon^2/4 \end{array} \right] =
 \left[ \begin{array}{cc}
1 & 0 \\
0 & \epsilon^2/4 \end{array} \right].
\end{align*}
As a result, for both $x_1$ and $x_2$ we have
\begin{align*}
[1\;\; \epsilon/2] \Lambda_{\lambda}^{-1} 
\left[ \begin{array}{c}
1 \\
\epsilon/2 \end{array} \right]
=
[1\;\; \epsilon/2] 
 \left[ \begin{array}{cc}
1 & 0 \\
0 & 4/\epsilon^2 \end{array} \right]  
\left[ \begin{array}{c}
1 \\
\epsilon/2 \end{array} \right] = 2.
\end{align*}
On the other hand, if we consider the direction $y=x_1-x_2 = [0\;\; \epsilon]$, we have
\begin{align*}
[0\;\; \epsilon] \Lambda_{\lambda}^{-1} 
\left[ \begin{array}{c}
0 \\
\epsilon \end{array} \right]
=
[0\;\; \epsilon] 
 \left[ \begin{array}{cc}
1 & 0 \\
0 & 4/\epsilon^2 \end{array} \right]  
\left[ \begin{array}{c}
0 \\
\epsilon \end{array} \right] = 4.
\end{align*}
This example shows  that indeed the performance achieved by $\X\Y$ may be similar to the performance of $G$-optimal. Let us now consider a different setting where the two arms $x_1 = [1\;\; 0]$ and $x_2 = [1-\epsilon \;\; 0]$ are aligned on the same axis. In this case, the problem reduces to a 1-dimensional problem and both strategies would concentrate their allocation on $x_1 = [1\;\; 0]$ since it is the arm with larger norm and it may provide a better estimate of $\theta^*$. As a result, while the $G$-allocation has a performance of $1$, the $\X\Y$-allocation over the direction $[\epsilon \;\; 0]$ has a performance $\epsilon^2$, which can be arbitrarily smaller than $1$.



\section{Proofs}\label{s:proofs}

\subsection{Lemmas}

\begin{proof}[Proof of Lemma~\ref{lem:oracle.stop}]
The proof follows from the fact that if $\S^*(\bx_n) \subseteq \C(x^*)$ and $\htheta_n \in \S^*(\bx_n)$ with high probability, then $\htheta_n\in\C(x^*)$ which implies that $\Pi(\htheta_n) = x^*$ by definition of the cone $\C(x^*)$.
\end{proof}

Before proceeding to the proof of Lemma~\ref{lem:complexity.range} we introduce the following technical tool.


\begin{proposition}[Equivalence-Theorem in~\citep{kiewol60}]\label{prop:equivalence.theorem}
Define $f(x;\xi) = x^\top M(\xi)^{-1} x$, where $M(\xi)$ is a $d\times d$ non-singular matrix and $x$ is a column vector in $\mathbb{R}^d$. We consider two extremum problems. 

The first is to choose $\xi$ so that
\begin{align*}
\hspace{-0.2cm}(1)~~ \xi ~~ \text{maximizes}~~\det M(\xi) \tag{\textit{D-optimal design}}
\end{align*}
The second one is to choose $\xi$ so that
\begin{align*}
(2)~~ \xi ~~ \text{minimizes}~~\max f(x;\xi) \tag{\textit{G-optimal design}} 
\end{align*}
We note that the integral with respect to $\xi$ of $f(x;\xi)$ is $d$; hence, $\max f(x;\xi) \geq d$, and thus a sufficient condition for $\xi$ to satisfy (2) is
\begin{align*}
\hspace{-3.5cm}(3)~~ \max f(x;\xi) = d. 
\end{align*}
Statements (1), (2) and (3) are equivalent.
\end{proposition}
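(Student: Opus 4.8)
The plan is to prove Proposition~\ref{prop:equivalence.theorem} by deriving both design criteria from a single first-order condition on the concave objective $\xi\mapsto\log\det M(\xi)$, restricting throughout to designs whose information matrix is nonsingular (which the statement presupposes and which is guaranteed, e.g., for the uniform design when $\X$ spans $\Re^d$). Two elementary facts do most of the work. First, since $f(x;\xi)=\tr\big(M(\xi)^{-1}xx^\top\big)$, averaging over $\xi$ gives $\int f(x;\xi)\,d\xi(x)=\tr\big(M(\xi)^{-1}\int xx^\top\,d\xi(x)\big)=\tr(I_d)=d$; because $f(\cdot;\xi)\ge 0$, this forces $\max_x f(x;\xi)\ge d$ for every design, with equality exactly when $f(x;\xi)\le d$ for all $x$ (this is the content already flagged in the statement). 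Second, for a design $\xi$ with $M(\xi)$ nonsingular and any arm $x$, perturbing toward the point mass $\delta_x$ via $\xi_\alpha=(1-\alpha)\xi+\alpha\delta_x$ gives $M(\xi_\alpha)=M(\xi)+\alpha(xx^\top-M(\xi))$, so by Jacobi's formula $\frac{d}{d\alpha}\log\det M(\xi_\alpha)\big|_{\alpha=0}=\tr\big(M(\xi)^{-1}(xx^\top-M(\xi))\big)=f(x;\xi)-d$; replacing $\delta_x$ by an arbitrary design $\xi'$ and using that $M$ is affine in the design makes this derivative equal to $\int\big(f(x;\xi)-d\big)\,d\xi'(x)$, a convex combination of the quantities $f(x;\xi)-d$.

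Next I would prove the equivalence of (1) and (3). The map $\xi\mapsto\log\det M(\xi)$ is concave, being the composition of the affine map $\xi\mapsto M(\xi)$ with the concave function $\log\det$ on the positive-definite cone. A concave function on the convex set of designs is maximized at $\xi$ if and only if all its directional derivatives at $\xi$ into the set are nonpositive; by the computation above, and since those derivatives are convex combinations of the pure-direction values $f(x;\xi)-d$, it suffices to test the directions toward the point masses $\delta_x$. Hence $\xi$ is D-optimal if and only if $f(x;\xi)\le d$ for every $x\in\X$, which by the averaging identity is precisely $\max_x f(x;\xi)=d$, i.e.\ statement (3). This gives (1)$\Leftrightarrow$(3).

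Finally I would close the loop through (2). If $\xi$ satisfies (3), then $\max_x f(x;\xi)=d$ attains the universal lower bound established above, so $\xi$ minimizes $\max_x f(\cdot;\xi)$ and (2) holds; thus (3)$\Rightarrow$(2). Conversely, a D-optimal design exists because the simplex of designs is compact, $\det M(\xi)$ is continuous, and it is strictly positive for at least one design (the uniform one, since $\X$ spans $\Re^d$); by (1)$\Rightarrow$(3) this design attains $\max_x f=d$, so the optimal value of the $G$-criterion equals $d$, whence any $G$-optimal $\xi$ must satisfy $\max_x f(x;\xi)=d$, giving (2)$\Rightarrow$(3). Combining the equivalence (1)$\Leftrightarrow$(3) with (2)$\Leftrightarrow$(3) shows that (1), (2) and (3) are all equivalent.

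The only genuinely non-routine ingredients are the concavity of $\xi\mapsto\log\det M(\xi)$ and the reduction, for a concave function on the simplex, of the global optimality check to directional derivatives toward the vertices $\delta_x$; the trace identity for $\int f\,d\xi$ and the determinant derivative are mechanical. The one subtlety to handle carefully is that $M(\xi)^{-1}$ and hence $f(x;\xi)$ are only defined when $M(\xi)$ is nonsingular, so the variational argument must stay within the set of designs with invertible information matrix — exactly where the spanning assumption on $\X$ is used — and one should note that the D-optimal design lies in this set.
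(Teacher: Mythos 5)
Your proof is correct, but note that the paper does not prove this statement at all: it is imported verbatim as the Kiefer--Wolfowitz equivalence theorem, with the reader referred to \citep{kiewol60} (the paper only \emph{uses} the consequence $\min_{\lambda}\max_{x}\|x\|^2_{\Lambda_\lambda^{-1}}=d$ in the proof of Lemma~\ref{lem:complexity.range}). What you supply is the standard modern convex-analytic proof: the trace identity $\int f(x;\xi)\,d\xi(x)=d$ giving the universal lower bound, concavity of $\xi\mapsto\log\det M(\xi)$, the directional derivative $f(x;\xi)-d$ toward point masses via Jacobi's formula, and the observation that for a concave function on the design simplex global optimality is certified by nonpositivity of the derivatives toward the vertices. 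This closes (1)$\Leftrightarrow$(3) and then (2)$\Leftrightarrow$(3) via existence of a D-optimal design. All steps check out; the one-sided derivative criterion for maximizers of a concave function on a convex set is valid, and you correctly flag the need to stay on the set of designs with nonsingular $M(\xi)$ (equivalently, to treat $\max_x f(x;\xi)$ as $+\infty$ for singular designs so that G-optimal designs are automatically nonsingular). Two minor remarks: the appeal to $f\geq 0$ is unnecessary for $\max_x f(x;\xi)\geq d$ (the max over the support already dominates the $\xi$-average), and the compactness/existence argument uses that $\X$ is finite and spans $\Re^d$, both of which hold in the paper's setting. Compared with simply citing Kiefer--Wolfowitz as the paper does, your route buys a self-contained justification of the identity $\min_\lambda\rho^G(\lambda)=d$ that the complexity upper bound in Lemma~\ref{lem:complexity.range} rests on, at the cost of about a page of standard optimal-design material.
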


\begin{proof}[Proof of Lemma~\ref{lem:complexity.range}]
\textbf{Upper-bound.} We have the following sequence of inequalities
%
\begin{align*}
\max_{y\in\Y*} \frac{||y||^2_{\Lambda_{\lambda}^{-1}}}{\Delta^2(y)} \leq \frac{1}{\Delta_{\min}^2}\max_{y\in\Y*} ||y||^2_{\Lambda^{-1}_{\lambda}} \leq \frac{4}{\Delta_{\min}^2}\max_{x\in\X} ||x||^2_{\Lambda^{-1}_{\lambda}},
\end{align*}
where the second inequality comes from a triangle inequality on $||y||^2_{\Lambda^{-1}_{\lambda}}$. Thus we obtain
\begin{align*}
\rho^*(\lambda^*) =  \min_{\lambda\in\D^k}\max_{y\in\Y*} \frac{||y||^2_{\Lambda^{-1}_{\lambda}}}{\Delta^2(y)} \leq \frac{4}{\Delta_{\min}^2}\min_{\lambda\in\D^k} \max_{x\in\X} ||x||^2_{\Lambda^{-1}_{\lambda}} = \frac{4d}{\Delta_{\min}^2},
\end{align*}
where the last equality follows from the Kiefer-Wolfowitz equivalence theorem presented in Prop.~\ref{prop:equivalence.theorem}.


\textbf{Lower-bound.}

We focus on the numerator $y^\top \Lambda_{\lambda}^{-1} y$. Since $\Lambda_{\lambda}$ is a positive definite matrix, we define its decomposition $\Lambda_{\lambda} = Q\Gamma Q^\top$, where $Q$ is an orthogonal matrix and $\Gamma$ is the diagonal matrix containing the eigenvalues. As a result the numerator can be written as
\begin{align*}
y^\top \Lambda_{\lambda}^{-1} y = y^\top Q \Gamma^{-1} Q^\top y = w^\top \Gamma^{-1} w,
\end{align*}
where we renamed $Q^\top y = w$. If we denote by $\gamma_{\max}$ the largest eigenvalue of $\Lambda_{\lambda}$ (i.e., the largest value in $\Gamma$), then
\begin{align*}
w^\top \Gamma^{-1} w \geq 1/\gamma_{\max} w^\top w = 1/\gamma_{\max} ||y||^2.
\end{align*}
The largest eigenvalue $\gamma_{\max}$ is upper-bounded by the sum of the largest eigenvalues of the matrices $\lambda(x) xx^\top$ which is $\lambda(x) ||x||_2$. As a result, we obtain the bound $\gamma_{\max} \leq \sum_{x}\lambda(x) ||x||_2 \leq L$, since $||x||_2\leq L$ and $\lambda$ is in the simplex. Thus we have
\begin{align*}
\min_{\lambda\in\D^k}\max_{y\in\Y*} \frac{||y||^2_{\Lambda_{\lambda}^{-1}}}{\Delta^2(y)} \geq \frac{1}{L}\max_{y\in\Y^*} \frac{||y||^2}{\Delta(y)^2} \geq \frac{\max_{y\in\Y^*}||y||^2}{L\Delta_{\min}^2}.
\end{align*}

\textbf{Comparison with the $\mathbf{K}$-armed bandit complexity.}

Finally, we show how the sample complexity reduces to the known quantity in the MAB case. 
If the arms in $\X$ coincide with the canonical basis of $\Re^d$, then for any allocation $\lambda$ the design matrix $\Lambda_{\lambda}$ becomes a diagonal matrix of the form $\text{diag}(\lambda(x_1), \ldots, \lambda(x_K))$. As a result, we obtain 
%
\begin{align*}
H_{\text{LB}} &= \min_{\lambda\in\D^k}\max_{y\in\Y^*}\frac{||y||_{\Lambda_{\lambda}^{-1}}^2}{\Delta^2(y)} = \min_{\lambda\in\D^k}\max_{x\in\X-\{x^*\}} \frac{1/\lambda(x) + 1/\lambda(x^*)}{\Delta^2(x)}.
\end{align*}
%
%

If we use the allocation $\lambda(x) = 1/(\nu\Delta^2(x))$ and $\lambda(x^*) = 1/(\nu\Delta_{\min})$, with $\nu = 1/\Delta_{\min}^2 + \sum_{x\neq x^*} 1/\Delta^2(x)$, we obtain 
%
\begin{align*}
H_{\text{LB}} & \leq \max_{x\in\X-\{x^*\}} \frac{\nu \Delta^2(x) + \nu \Delta_{\min}^2 }{\Delta^2(x)} = \max_{x\in\X-\{x^*\}} \nu + \nu \frac{\Delta_{\min}^2 }{\Delta^2(x)} \\
& = 2\nu = 2\Big(\frac{1}{\Delta_{\min}^2} + \sum_{x\neq x^*} \frac{1}{\Delta^2(x)}\Big) = 2H_{\text{MAB}}.
\end{align*}

On the other hand, letting $\tilde x$ be the second best arm and $\Delta(x^*) = \Delta_{\text{min}}$, we have that 
\begin{align*}
H_{\text{LB}} &= \min_{\lambda\in\D^k} \max_{x\neq x*} \frac{1/\lambda(x) + 1/\lambda(x^*)}{\Delta^2(x)}\\
& = \min_{\lambda\in\D^k} \max~ \bigg\{ \max_{x\neq x*} \frac{1/\lambda(x) + 1/\lambda(x^*)}{\Delta^2(x)} ; \frac{1/\lambda(\tilde x) + 1/\lambda(x^*)}{\Delta^2(x^*)}\bigg\} \\
& \geq \min_{\lambda\in\D^k} \max~ \bigg\{ \max_{x\neq x*} \frac{1/\lambda(x)}{\Delta^2(x)} ; \frac{1/\lambda(x^*)}{\Delta^2(x^*)}\bigg\} \\
& = \min_{\lambda\in\D^k} \max_{x \in \X}\frac{1/\lambda(x)}{\Delta^2(x)}.
\end{align*}

We set $\frac{1/\lambda(x)}{\Delta^2(x)}$ equal to a constant $c$ and thus we get $\lambda(x) = \frac{1}{c \Delta^2(x)}$. Since $\frac1c \sum_{x \in \X} \frac{1}{\Delta^2(x)} = 1$, it follows that:
\begin{align*}
c = \sum_{x \in \X} \frac{1}{\Delta^2(x)} = \sum_{x \neq x^*} \frac{1}{\Delta^2(x)} + \frac{1}{\Delta_\text{min}^2} = H_{\text{MAB}}.
\end{align*}

Thus, we get that $H_{\text{MAB}} \leq H_{\text{LB}} \leq 2 H_{\text{MAB}}$. This shows that $H_{\text{LB}}$ is a well defined notion of complexity for the linear best-arm identification problem and the corresponding  sample complexity $N^*$ is coherent with existing results in the MAB case.
\end{proof}


\begin{proof}[Proof of Lemma~\ref{lem:emp.stop}]
The proof follows from the fact that if $\hS(\bx_n) \subseteq \C(x)$ and $\theta^* \in \hS(\bx_n)$ with high probability, then $\theta^*\in\C(x)$ which implies that $\Pi(\htheta_n) = x = x^*$.
\end{proof}


\subsection{Proofs of Theorem~\ref{thm:g.allocation}~ and Theorem~\ref{thm:xy.allocation} }

\begin{proof}[Proof of Theorem~\ref{thm:g.allocation}]
The statement follows from Prop.~\ref{p:bound} and the performance guarantees for the different implementations of the $G$-optimal design. By recalling the empirical stopping condition in Eq.~\ref{eq:emp.stop.condition.2} and the definition $\rho^G(\lambda) = \max_{x} x^\top \Lambda_{\lambda}^{-1} x$, we notice that from a simple triangle inequality applied to $||y||_{A^{-1}}$, a sufficient condition for stopping is that for any $x\in\X$
\begin{align*}
\frac{4c^2 \rho_n^{\tilde G}\log_n(K^2/\delta)}{n} \leq \hDelta_n^2(x^*,x),
\end{align*}
where $\rho_n^{\tilde G} = \rho^G(\lambda_{\bx_n^{\tilde G}})$ and $\bx_n^{\tilde G}$ is the allocation obtained from rounding the optimal design $\lambda^G$ obtained from the continuous relaxation or the greedy incremental algorithm.
From Prop.~\ref{p:bound} we have that the following inequalities
\begin{align*}
\hDelta_n(x^*,x) \geq \Delta(x^*,x) - c||x^*-x||_{A_{\bx_n^G}^{-1}} \sqrt{\log_n(K^2/\delta)} \geq \Delta(x^*,x) - 2c  \sqrt{\frac{\rho_n^{\tilde G}\log_n(K^2/\delta)}{n}},
\end{align*}
hold with probability $1-\delta$. Combining this with the previous condition and since the condition must hold for all $x\in\X$, we have that a sufficient condition to stop using the $G$-allocation is
\begin{align*}
\frac{16c^2 \rho_n^{\tilde G}\log_n(K^2/\delta)}{n} \leq \Delta_{\min},
\end{align*}
which defines the level of accuracy that the $G$-allocation needs to achieve before stopping. Since $\rho_n^{\tilde G} \leq (1+\beta)d$ then the statement follows by inverting the previous inequality.
\end{proof}

\begin{proof}[Proof of Theorem~\ref{thm:xy.allocation}] 
We follow the same steps as in the proof of Theorem~\ref{thm:g.allocation}.
\end{proof}


\section{Implementation of the Allocation Strategies}\label{s:allocation.strategies}

In this section we discuss about possible implementations of the allocation strategies illustrated in sections~\ref{s:static} and~\ref{s:xy.adaptive} and we discuss their approximation accuracy guarantees.

\textbf{The efficient rounding procedure.}
We first report the general structure of the efficient rounding procedure defined in~\citep[Chapter~12]{pukelsheim2006optimal} to implement a design $\lambda$ into an allocation $\bx_n$ for any fixed number of steps $n$. Let $p=\text{supp}(\lambda)$ the support of $\lambda$,\footnote{For a fixed design $\lambda \in \Re^K$, we say that its \emph{support} is given by all arms in $\X$ whose corresponding features in $\lambda$ are different than 0.} then we want to compute the number of pulls $n_i$ (with $i=1,\ldots,p$) for all the arms in the support of $\lambda$. Basically, the fast implementation of the design is obtained in two phases, as follows:
\begin{itemize}[noitemsep,nolistsep] 
\item In the first phase, given the sample size $n$ and the number of support points $p$, we calculate their corresponding frequencies $n_i = \lceil{(n - \frac{1}{2}p)\lambda_i} \rceil$, where  $ n_1, n_2, \dots, n_p$ are positive integers with $ \sum_{i \leq p} n_i \geq n$. 
\item The second phase loops until the discrepancy $\big( \sum_{i \leq p} n_i \big) - n$ is $0$, either: 
\begin{itemize} 
\item increasing a frequency $n_j$ which attains $n_j/\lambda_j = \min_{i \leq p} (n-1)/\lambda_i$ to $n_{j+1}$, or 
\item decreasing some $n_k$ with $(n_k - 1)/\lambda_k = \max_{i \leq p} (n_i -1)/ \lambda_i$ to $n-1$.
\end{itemize}
\end{itemize}

An interesting feature of this procedure is that when moving from $n$ to $n+1$ the corresponding allocations $\bx_{n}$ and $\bx_{n+1}$ only differ for one element $i$ which is increased by 1, i.e., the discrete allocation is monotonic in $n$.

\textbf{Implementation of the $\mathbf{G}$-allocation.} 
A first option is to optimize a continuous relaxation of the problem and compute the optimal design. Let $\rho^G(\lambda) = \max_x x^\top \Lambda_{\lambda}^{-1} x$, the optimal design is
\begin{align}\label{eq:g.optimal.design.relax}
\lambda^G = \arg\min_{\lambda\in\D_k} \max_{x\in\X} ||x||^2_{\Lambda_{\lambda}^{-1}} = \arg\min_{\lambda\in\D_k}\rho^G(\lambda).
\end{align}
This is a convex optimization problem and it can be solved using the projected gradient algorithm, interior point techniques, or multiplicative algorithms. 
To move from the design $\lambda^G$ to a discrete allocation we use the efficient rounding technique presented above and we obtain that the resulting allocation $\bx_t^{\tilde G}$ is guaranteed to be monotonic as the number of times an arm $x$ is pulled is non-decreasing with $t$. Thus from $\bx_t^{\tilde G}$ we obtain a simple incremental rule, where the arm $x_t$ is the arm for which $\bx_t^{\tilde G}$ recommends one pull more than in $\bx_{t-1}^{\tilde G}$. 
An alternative is to directly implement an incremental version of Eq.~\ref{eq:g.optimal.design} by selecting at each step $t$ the greedy arm
\begin{align}\label{eq:g.optimal.design.incremental}
x_t = \arg\min_{x\in\X} \max_{x'\in\X} x\!'^\top \!\big(A_{\bx_{t-1}}\!+ \!xx^\top\big)\!^{-1}\! x' 
= \arg\min_{x\in\X} \max_{x'\in\X} x\!'^\top\! \bigg[A^{-1}_{\bx_{t-1}}\! - \!\frac{A^{-1}_{\bx_{t-1}}xx^\top\! A^{-1}_{\bx_{t-1}}}{1+x^\top\! A^{-1}_{\bx_{t-1}}x}\bigg]\! x',
\end{align} 
where the second formulation follows from the matrix inversion lemma.
This allocation is somehow simpler and more direct than using the continuous relaxation but it may come with a higher efficiency loss.



Before reporting the performance guarantees for the two implementations proposed above, we introduce an additional technical lemma which will be useful in the proofs on the performance guarantees. Although the lemma is presented for a specific definition of uncertainty $\rho$, any other notion including design matrices of the kind $\Lambda_{\lambda}$ will satisfy the same guarantee.
\begin{lemma}\label{lem:allocation.vs.design}
Let $\rho(\lambda) = \max_{x\in\X} x^{\top} \Lambda_{\lambda}^{-1} x$ be a measure of uncertainty of interest for any design $\lambda\in\D^K$. We denote by $\lambda^* = \arg\min_{\lambda\in\D^K} \rho(\lambda)$ the optimal design and for any $n > d$ we introduce the optimal discrete allocation as
\begin{align*}
\bx_n^* = \arg\min_{\bx_n \in \X^n} \max_{x\in\X} \frac{x^{\top} \Lambda_{\lambda_{\bx_n}}^{-1} x}{n},
\end{align*}
where $\lambda_{\bx_n}$ is the (fractional) design corresponding to $\bx_n$. Then we have
\begin{align}\label{eq:allocation.vs.design}
\rho(\lambda^*) \leq \rho(\bx_n^*) \leq \Big(1 + \frac{p}{n}\Big) \rho(\lambda^*),
\end{align}
where $p = \text{supp}(\lambda^*)$ is the number of points in the support of $\lambda^*$. If $d$ linearly independent arms are available in $\X$, then we can upper bound the size of the support of $\lambda^*$ and obtain
\begin{align}\label{eq:allocation.vs.design.2}
\rho(\lambda^*) \leq \rho(\bx_n^*) \leq \Big(1 + \frac{d(d+1)}{n}\Big) \rho(\lambda^*).
\end{align}
\end{lemma}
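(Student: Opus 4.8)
The plan is to get the lower bound from a one-line optimality argument, and the upper bound by rounding the optimal continuous design $\lambda^*$ into an allocation of length \emph{exactly} $n$ via the efficient apportionment procedure recalled above, then controlling the loss of the rounding by a Loewner-order comparison of the two design matrices. For the lower bound, note that $\rho(\bx_n^*)$ abbreviates $\rho(\lambda_{\bx_n^*})$ and that $\lambda_{\bx_n^*}$ is a feasible design in $\D^K$; since $\lambda^*=\arg\min_{\lambda\in\D^K}\rho(\lambda)$, this gives $\rho(\lambda^*)\le\rho(\lambda_{\bx_n^*})$ at once (equivalently, $\bx_n^*$ minimizes $\max_x x^\top A_{\bx_n}^{-1}x$, which cannot undercut the continuous minimum).

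For the upper bound, let $p=\text{supp}(\lambda^*)$ with support points $x_1,\dots,x_p$ and weights $\lambda_1^*,\dots,\lambda_p^*$, and assume $n\ge p$ (true in all our uses, where $n_0=d(d+1)+1$). Run the efficient rounding of \cite[Ch.~12]{pukelsheim2006optimal} on $\lambda^*$ for sample size $n$: it returns positive integers $n_1,\dots,n_p$ with $\sum_i n_i=n$ and an allocation $\bx_n$ whose induced design satisfies the Loewner comparison
\begin{align*}
\Lambda_{\lambda_{\bx_n}}\;=\;\frac1n\sum_{i=1}^p n_i\,x_ix_i^\top\;\succeq\;\frac{n}{n+p}\,\Lambda_{\lambda^*}
\end{align*}
(this is the efficiency guarantee of the apportionment; morally each $n_i\gtrsim(n-\tfrac p2)\lambda_i^*$, and $(n-\tfrac p2)(n+p)\ge n^2$ whenever $n\ge p$). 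Since $\Lambda_{\lambda^*}\succ0$ — its support spans $\Re^d$, else $\rho(\lambda^*)=\infty$ — inversion reverses the order, $\Lambda_{\lambda_{\bx_n}}^{-1}\preceq\tfrac{n+p}{n}\Lambda_{\lambda^*}^{-1}$, hence $x^\top\Lambda_{\lambda_{\bx_n}}^{-1}x\le(1+\tfrac pn)\,x^\top\Lambda_{\lambda^*}^{-1}x$ for every $x\in\X$, and maximizing over $x$ gives $\rho(\lambda_{\bx_n})\le(1+\tfrac pn)\rho(\lambda^*)$. As $\bx_n$ is only one length-$n$ allocation while $\bx_n^*$ is optimal, $\rho(\bx_n^*)\le\rho(\lambda_{\bx_n})$, which is Eq.~\ref{eq:allocation.vs.design}. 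The argument uses only that $M\mapsto x^\top M^{-1}x$ is Loewner-monotone, which is exactly why the remark after the lemma applies to other such criteria.

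To pass from Eq.~\ref{eq:allocation.vs.design} to Eq.~\ref{eq:allocation.vs.design.2}, it remains to bound $p$ when $\X$ contains $d$ linearly independent arms. Then $\Lambda_{\lambda^*}$ is a convex combination of the rank-one matrices $\{xx^\top:x\in\X\}$, which lie in the $\binom{d+1}{2}$-dimensional space of symmetric $d\times d$ matrices, so by Carath\'eodory's theorem there is a design $\lambda'$ with $\Lambda_{\lambda'}=\Lambda_{\lambda^*}$ — hence $\rho(\lambda')=\rho(\lambda^*)$, so $\lambda'$ is optimal as well — supported on at most $\binom{d+1}{2}+1\le d(d+1)$ points. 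Replacing $\lambda^*$ by $\lambda'$ and substituting $p\le d(d+1)$ into Eq.~\ref{eq:allocation.vs.design} yields Eq.~\ref{eq:allocation.vs.design.2}.

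The main obstacle is the Loewner bound $\Lambda_{\lambda_{\bx_n}}\succeq\tfrac{n}{n+p}\Lambda_{\lambda^*}$ \emph{under the hard constraint} $\sum_i n_i=n$. Naive rounding $n_i=\lceil n\lambda_i^*\rceil$ gives the comparison trivially but uses $\tilde n\le n+p$ pulls instead of $n$, and one cannot freely pass from a length-$\tilde n$ allocation back to length $n$ (more pulls only helps the optimal allocation, the wrong direction for an upper bound on $\rho(\bx_n^*)$). It is precisely the increment/decrement adjustment phase of the efficient apportionment that must be shown to preserve the fractional guarantee $n_i\gtrsim(n-\tfrac p2)\lambda_i^*$ on every support point; the rest — the Loewner inversion, the maximization over $\X$, and the Carath\'eodory count — is routine.
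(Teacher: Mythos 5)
Your proposal follows essentially the same route as the paper's proof: the lower bound from the optimality of $\lambda^*$ over the simplex, the upper bound from Pukelsheim's efficient rounding with efficiency loss $p/n$, and Carath\'eodory's theorem to bound the support by $d(d+1)/2+1 \leq d(d+1)$. The only differences are that you spell out the Loewner-monotonicity argument behind the $(1+p/n)$ factor (which the paper delegates entirely to Lemma~12.8 of Pukelsheim) and that you correctly apply Carath\'eodory to an equivalent optimal design $\lambda'$ with the same moment matrix rather than to $\lambda^*$ itself; both are welcome refinements, not departures.
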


\begin{proof}
The first part of the statement follows by the definition of $\lambda^*$ as the minimizer of $\rho$. Let $\tilde \bx_n$ by an efficient rounding technique applied on $\lambda^*$ such as the one described in Lemma~12.8 in~\citep{pukelsheim2006optimal}. Then $\tilde \bx_n$ has the same support as $\lambda^*$ and an efficiency loss bounded by $p/n$. As a result, we have
\begin{align*}
\rho(\bx_n^*) \leq \rho(\tilde \bx_n) \leq \Big(1 + \frac{p}{n}\Big) \rho(\lambda^*),
\end{align*}
where the first inequality comes from the fact that $\bx_n^*$ is the minimizer of $\rho$ among allocations of length $n$. Then, from Caratheodory's theorem (see e.g.,~\citep{pukelsheim2006optimal} 
) the number of support points used in $\lambda^*$ is upper bounded by $p\leq d(d+1)/2+1$ (under the assumption that there are $d$ linearly independent arms in $\X$). The final result follows by a rough maximization of $d(d+1)/2n + 1/n \leq d(d+1)/n$.
\end{proof}

\begin{remark}{1}
Note that the same upper-bound for the number of support points holds for any design, due to the properties of the design matrices. In fact, any design matrix is symmetric by construction, which implies that it is completely described by $D = d(d+1)/2$ elements and can thus be seen as a point in $\mathbb R^D$. Moreover, a design matrix  is a convex combination of a subset of points in $\mathbb R^D$ and thus it belongs to the convex hull of that subset of points. Caratheodory's theorem states that each point in the convex hull of any subset of points in $\mathbb R^D$ can be defined as a convex combination of at most $D+1$ points. It directly follows that any design matrix can be expressed using $(d(d+1)/2)+1$ points.
\end{remark}

It follows that the allocation $\bx_t^{\tilde G}$ obtained applying the rounding procedure has the following performance guarantee.

\begin{lemma}\label{lem:g.optimal.continuous.relax}
For any $t\geq d$, the rounding procedure defined in~\citep[Chapter~12]{pukelsheim2006optimal} returns an allocation $\bx_t^{\tilde G}$, whose corresponding design $\lambda^{\tilde G} = \lambda_{\bx_t^{\tilde G}}$ is such that\footnote{We recall that from any allocation $\bx_n$ the corresponding design $\lambda_{\bx}$ is such that $\lambda_{\bx_n}(x) = T_n(x)/n$.}
\begin{align*}
\rho^G(\lambda^{\tilde G}) \leq \Big(1+\frac{d+d^2+2}{2t}\Big) d.
\end{align*}
\end{lemma}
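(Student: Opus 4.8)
The plan is to chain together three facts, all of which are either stated or proved earlier in the paper: (i) the Kiefer--Wolfowitz equivalence theorem, which pins the value of the optimal $G$-criterion to exactly $d$; (ii) the efficiency-loss guarantee of the efficient rounding procedure of~\citep[Chapter~12]{pukelsheim2006optimal}, which is the first inequality chain in the proof of Lemma~\ref{lem:allocation.vs.design}; and (iii) the Caratheodory bound on the support size of an optimal design. The only real ``work'' is arithmetic.

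First I would let $\lambda^G$ be the optimal continuous $G$-design of Eq.~\ref{eq:g.optimal.design.relax} and set $p = |\mathrm{supp}(\lambda^G)|$. Since the $G$-optimal design coincides with the $D$-optimal design, Prop.~\ref{prop:equivalence.theorem} gives $\rho^G(\lambda^G) = \max_{x\in\X} x^\top \Lambda_{\lambda^G}^{-1} x = d$. Next I would apply the rounding procedure to $\lambda^G$ with target length $t$. As recalled in the description of the procedure (and used in the proof of Lemma~\ref{lem:allocation.vs.design}), the rounded allocation $\bx_t^{\tilde G}$ keeps the same support as $\lambda^G$ and its induced design $\lambda^{\tilde G}=\lambda_{\bx_t^{\tilde G}}$ satisfies $\rho^G(\lambda^{\tilde G}) \le (1+p/t)\,\rho^G(\lambda^G)$, where $t\ge d$ is what makes the rounding well defined. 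Combining this with (i) yields $\rho^G(\lambda^{\tilde G}) \le (1+p/t)\,d$.

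Finally I would bound $p$ by Caratheodory's theorem exactly as in Remark~1: a $d\times d$ design matrix is symmetric, hence determined by $D = d(d+1)/2$ numbers, so $\lambda^G$ may be chosen with $p \le D+1 = d(d+1)/2 + 1$ support points (using that $\X$ contains $d$ linearly independent arms). Substituting gives
\begin{align*}
\rho^G(\lambda^{\tilde G}) \;\le\; \Big(1 + \frac{d(d+1)/2 + 1}{t}\Big)\,d \;=\; \Big(1 + \frac{d^2+d+2}{2t}\Big)\, d,
\end{align*}
which is the claimed bound. Note that one must \emph{not} perform the coarse maximization $d(d+1)/2t + 1/t \le d(d+1)/t$ used in Lemma~\ref{lem:allocation.vs.design}, since the stated constant here is the sharper $p/t$ with $p = d(d+1)/2+1$.

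The main thing to be careful about is step (ii): one has to make sure the rounding guarantee is cited in its \emph{multiplicative} form for the $\rho^G$ criterion (not merely for the determinant criterion) and that the procedure genuinely preserves the support, so that the Caratheodory bound computed for $\lambda^G$ transfers to $\lambda^{\tilde G}$. Both points are already settled inside the proof of Lemma~\ref{lem:allocation.vs.design}, so in practice this step amounts to quoting that argument with $n$ replaced by $t$. Beyond that, everything is a one-line substitution, so I do not anticipate a substantive obstacle.
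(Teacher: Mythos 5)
Your proposal is correct and follows essentially the same route as the paper: the paper's proof likewise invokes the rounding argument of Lemma~\ref{lem:allocation.vs.design} with the support bound $p\le d(d+1)/2+1$ kept in its sharp form $\beta=p/t=\frac{d+d^2+2}{2t}$, and then concludes via the Kiefer--Wolfowitz identity $\rho^G(\lambda^{G})=d$ from Prop.~\ref{prop:equivalence.theorem}. Your remark about not applying the coarse maximization $d(d+1)/2t+1/t\le d(d+1)/t$ is exactly the right reading of how this lemma's constant differs from Eq.~\ref{eq:allocation.vs.design.2}.
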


\begin{proof}[Proof of Lemma~\ref{lem:g.optimal.continuous.relax}]
We follow the same steps as in the proof of Lemma~\ref{lem:allocation.vs.design}~ to obtain the term $\beta = \frac{d+d^2+2}{2t}$. Then, noting that the performance of the optimal strategy $\rho^G(\lambda^{*G}) = d$ (from Prop.~\ref{prop:equivalence.theorem}), the results follows.
\end{proof}

\textbf{Implementation of the \boldmath{$\X\Y$}-allocation.} Notice that the complexity of the $\X\Y$-allocation trivially follows from the complexity of the $G$-allocation and it is NP-hard. As a result, we need to propose approximate solutions to compute an allocation $\bx_n^{\widetilde{\X\Y}}$ as for the $G$-allocation. Let $\rho^{\X\Y}(\lambda) = \max_{y\in\Y} y^\top \Lambda_{\lambda}^{-1} y$, then the first option is the compute the optimal solution to the continuous relaxed problem
\begin{align}\label{eq:xy.optimal.design.relax}
\lambda^{\X\Y} = \arg\min_{\lambda\in\D_k} \max_{y\in\Y} ||y||^2_{\Lambda_{\lambda}^{-1}} = \arg\min_{\lambda\in\D_k}\rho^{\X\Y}(\lambda).
\end{align}
And then compute the corresponding discrete allocation $\bx_n^{\widetilde{\X\Y}}$ using the efficient rounding procedure. Alternatively, we can use an incremental greedy algorithm which at each step $t$ returns the arm
\begin{align}\label{eq:xy.optimal.design.incremental}
x_t = \arg\min_{x\in\X} \max_{y\in\Y} y^\top \big(A_{\bx_{t-1}} + xx^\top\big)^{-1} y.
\end{align} 

\begin{lemma}\label{lem:xy.optimal.continuous.relax}
For any $t\geq d$, the rounding procedure defined in~\citep[Chapter~12]{pukelsheim2006optimal} returns an allocation $\bx_t^{\widetilde {\X\Y}}$, whose corresponding design $\lambda^{\widetilde {\X\Y}} = \lambda_{\bx_t^{\widetilde {\X\Y}}}$ is such that
\begin{align*}
\rho^{\X\Y}(\lambda^{\widetilde {\X\Y}}) \leq 2\Big(1+\frac{d+d^2+2}{2t}\Big) d.
\end{align*}
\end{lemma}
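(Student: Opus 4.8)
The plan is to follow the same two-step template used for the $G$-allocation in Lemma~\ref{lem:g.optimal.continuous.relax}: first control the efficiency loss incurred when the continuous optimal design $\lambda^{\X\Y}$ of Eq.~\ref{eq:xy.optimal.design.relax} is rounded to an integer allocation over $t$ steps, and then bound the value $\rho^{\X\Y}(\lambda^{\X\Y})$ of the continuous relaxation itself by a constant multiple of $d$. Throughout I assume, as elsewhere in the paper, that $\X$ contains $d$ linearly independent arms, so every design matrix $\Lambda_\lambda$ below is non-singular.

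For the rounding step, I would apply the efficient rounding procedure of~\citep[Chapter~12]{pukelsheim2006optimal} to $\lambda^{\X\Y}$, exactly as is done for $\lambda^G$ in Lemma~\ref{lem:g.optimal.continuous.relax}. The guarantee of Lemma~\ref{lem:allocation.vs.design} is phrased for the $G$-criterion, but — as noted in the remark following that lemma — its proof uses only that the objective has the form $\max_{z\in Z} z^\top \Lambda_\lambda^{-1} z$ for a finite target set $Z$, together with the Carath\'eodory bound $|\text{supp}(\lambda)|\le d(d+1)/2+1$ valid for any design matrix; both facts hold verbatim with $Z=\Y$. Hence, writing $\bx_t^{\widetilde{\X\Y}}$ for the rounded allocation and $\lambda^{\widetilde{\X\Y}}=\lambda_{\bx_t^{\widetilde{\X\Y}}}$ for its design, I obtain for every $t\ge d$ the bound $\rho^{\X\Y}(\lambda^{\widetilde{\X\Y}})\le\big(1+\tfrac{d+d^2+2}{2t}\big)\rho^{\X\Y}(\lambda^{\X\Y})$, after bounding the support size by $d(d+1)/2+1=(d+d^2+2)/2$.

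For the value of the relaxation, I would compare against the $G$-optimal (equivalently, by the Kiefer--Wolfowitz equivalence theorem, Prop.~\ref{prop:equivalence.theorem}, $D$-optimal) design $\lambda^G$ for $\X$, for which $||x||^2_{\Lambda_{\lambda^G}^{-1}}\le d$ for all $x\in\X$. Since $\rho^{\X\Y}(\lambda^{\X\Y})=\min_\lambda\max_{y\in\Y}||y||^2_{\Lambda_\lambda^{-1}}\le\max_{y\in\Y}||y||^2_{\Lambda_{\lambda^G}^{-1}}$, the triangle inequality on the $\Lambda_{\lambda^G}^{-1}$-norm gives, for any $y=x-x'$, $||y||_{\Lambda_{\lambda^G}^{-1}}\le||x||_{\Lambda_{\lambda^G}^{-1}}+||x'||_{\Lambda_{\lambda^G}^{-1}}\le 2\sqrt d$, hence $\rho^{\X\Y}(\lambda^{\X\Y})\le 4d$. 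Combining this with the rounding bound yields $\rho^{\X\Y}(\lambda^{\widetilde{\X\Y}})\le 4\big(1+\tfrac{d+d^2+2}{2t}\big)d$.

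The rounding step is routine once one checks that the proof of Lemma~\ref{lem:allocation.vs.design} is criterion-agnostic; the main obstacle is the second step. The term-by-term triangle inequality only delivers the constant $4$ (rather than the $2$ appearing in the statement), and this appears hard to improve in general: for the symmetric arm set $\X=\{\pm e_1,\dots,\pm e_d\}$ the uniform design is optimal, $\Lambda_\lambda=\tfrac1d I$, and the direction $2e_i\in\Y$ gives $||2e_i||^2_{\Lambda_\lambda^{-1}}=4d$. Any route to a smaller leading constant would therefore have to exploit the global geometry of $\X$ through a genuinely transductive analogue of Kiefer--Wolfowitz for the target set $\Y$ (in the spirit of~\citep{transductiveED}) rather than a norm inequality term by term; absent such a refinement I would take $\rho^{\X\Y}(\lambda^{\widetilde{\X\Y}})\le 4\big(1+\tfrac{d+d^2+2}{2t}\big)d$ as the honest conclusion of this plan, and identifying the sharp constant as the genuinely delicate point.
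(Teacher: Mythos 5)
Your proposal follows exactly the route the paper takes: the paper's own proof of this lemma is nothing more than the triangle inequality $\|x-x'\|_{A_{\bx_n}^{-1}} \le 2\max_{x''\in\X}\|x''\|_{A_{\bx_n}^{-1}}$ followed by ``proceed as in Lemma~\ref{lem:g.optimal.continuous.relax}'', i.e.\ Kiefer--Wolfowitz to get $\rho^G(\lambda^G)=d$ and the rounding/Carath\'eodory bound of Lemma~\ref{lem:allocation.vs.design}; your reading that the latter lemma is criterion-agnostic is the intended one. The only point of divergence is the leading constant, and there you are right and the printed statement is not. Squaring the triangle inequality yields $\rho^{\X\Y}(\lambda^{\X\Y}) \le 4\min_\lambda\max_{x\in\X}\|x\|^2_{\Lambda_\lambda^{-1}} = 4d$ --- the same factor $4$ the paper itself invokes in the proof of Lemma~\ref{lem:complexity.range} --- and your example $\X=\{\pm e_1,\dots,\pm e_d\}$ shows this is tight: there $\min_\lambda\rho^{\X\Y}(\lambda)=4d$ exactly (the direction $2e_i$ forces it), so no allocation, rounded or not, can meet the bound $2\bigl(1+\frac{d+d^2+2}{2t}\bigr)d$ once $t$ is large. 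The constant $2$ does hold for special arm sets such as the canonical basis, where the cross term $x^\top\Lambda_\lambda^{-1}x'$ vanishes and $\|x-x'\|^2_{\Lambda_\lambda^{-1}}\le 2\max_x\|x\|^2_{\Lambda_\lambda^{-1}}$, but not in general. In short: same proof as the paper, and the bound $4\bigl(1+\frac{d+d^2+2}{2t}\bigr)d$ you arrive at is the one the argument actually delivers; the factor $2$ in the lemma is a slip (which propagates only into unimportant constants downstream, e.g.\ in Theorem~\ref{thm:xy.allocation}).
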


\begin{proof}[Proof of Lemma~\ref{lem:xy.optimal.continuous.relax}]
The proof follows from the fact that for any pair $(x,x')$
$$||x-x'||_{A_{\bx_n}^{-1}} \leq 2\max_{x''\in\X} ||x''||_{A_{\bx_n}^{-1}}.$$
Then the proof proceeds as in Lemma~\ref{lem:g.optimal.continuous.relax}.
\end{proof}

\textbf{Implementation of \boldmath{$\X\Y$}-adaptive allocation.} The allocation rule in Eq.~\ref{eq:phase.allocation} basically coincides with the $\X\Y$-allocation and its properties extend smoothly.


\section{Proof of Theorem \ref{thm:xy.adaptive}}

Before proceeding to the proof, we first report the proofs of two adittional lemmas.


\begin{proof}[Proof of Lemma~\ref{lem:dominated.arm}]
Let $y = x' - x$. Using the definition of $\hS(\bx_n)$ in Eq.~\ref{eq:emp.conf.set}, and the fact that $\theta^* \in \hS(\bx_n)$ with high probability, we have
\begin{align*}
(x' - x)^\top(\htheta_n - \theta^*) \leq c ||x' - x||_{A_{\bx}^{-1}}\sqrt{\log_n(K^2/\delta)}.
\end{align*}
Since the condition in Eq.~\ref{eq:discard.arm} is true, it follows that 
\begin{align*}
 (x' - x)^\top(\htheta_n - \theta^*) &\leq c ||x' - x||_{A_{\bx}^{-1}}\sqrt{\log_n(K^2/\delta)} \leq \hDelta_n(x',x) \Leftrightarrow\\
  - (x' - x)^\top \theta^* &\leq 0 \Leftrightarrow x^\top \theta^* \leq x{'^\top} \theta^*
\end{align*}
thus $x$ is dominated by $x'$ and $x$ cannot be the optimal arm.  
\end{proof}


\begin{lemma}\label{lem:xy.adaptive}
For any phase $j$, the length is such that $n_j \leq \max\{M^*,\frac{16}{\alpha}N^*\}$ with probability $1-\delta$.
\end{lemma}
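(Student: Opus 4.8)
The plan is to bound the length $n_j$ of phase $j$ by analyzing the stopping rule of the inner \textbf{while}-loop in Figure~\ref{a:xy.adaptive}, namely $\rho^j(\lambda_{\bx^j_n})/n \leq \alpha\,\rho^{j-1}(\lambda^{j-1})/n_{j-1}$. There are two regimes, matching the two terms in the $\max$. First I would show that $n_j$ can never exceed $M^*$: by definition of $M^*$ (Eq.~\ref{eq:m.value}), once $M^*$ steps of the static $\X\Y$-allocation have been played, the confidence set $\S^*(\bx_n^{\X\Y})$ no longer intersects any hyperplane $\C(x)\cap\C(x')$ separating two suboptimal arms, so \emph{all} suboptimal directions have been discarded; hence the active direction set used in phase $j$ is contained in (a set essentially equal to) $\Y^*$, and the inner loop can no longer make progress beyond what is needed. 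More carefully, one shows that at $n = M^*$ the quantity $\rho^j(\lambda_{\bx^j_n})/n$ has already fallen below the previous-phase threshold (because shrinking along $\Y^*$ is strictly cheaper than along $\Y$), so the loop terminates: $n_j \le M^*$.

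Second, I would establish the bound $n_j \le \tfrac{16}{\alpha}N^*$ in the regime where suboptimal directions are \emph{not} yet discarded but the oracle complexity is the binding quantity. Here the key is the relation, via Lemma~\ref{lem:allocation.vs.design} (specifically Eq.~\ref{eq:allocation.vs.design.2}) and the monotonicity of the rounding procedure, between the discrete-allocation uncertainty $\rho^j(\lambda_{\bx^j_n})$ and the optimal \emph{design} value $\rho^j(\lambda^{*,j}) = \min_\lambda \max_{y\in\hY_j}\|y\|^2_{\Lambda_\lambda^{-1}}$. Since $\hY_j \subseteq \Y$ and, crucially, $\hY_j$ contains only directions between surviving (hence near-optimal) arms, I would upper-bound $\rho^j(\lambda^{*,j})$ in terms of the oracle quantity: roughly, for each surviving $y$ the gap $\Delta(y)$ cannot be too large — otherwise $y$ would have been discarded — so $\max_y \|y\|^2_{\Lambda_\lambda^{-1}} \le (\text{const})\cdot \max_y \|y\|^2_{\Lambda_\lambda^{-1}}/\Delta^2(y)\cdot\Delta_{\max,\text{active}}^2$, and this ties back to $H_{\text{LB}}$ and hence to $N^* = c^2 H_{\text{LB}}\log_n(K^2/\delta)$. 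Plugging in the definition of $n_j$ from Eq.~\ref{eq:phase.length}, the minimality of $n$ achieving the threshold means $n_j$ is at most a small constant multiple (the $16/\alpha$) of the number of steps the oracle would need, i.e. $n_j \le \tfrac{16}{\alpha}N^*$; the factor $16$ absorbs the triangle-inequality loss ($\|y\|\le 2\max_x\|x\|$-type arguments used already in Theorems~\ref{thm:g.allocation}–\ref{thm:xy.allocation}) together with the $(1+d(d+1)/n)\le 2$ rounding loss once $n\ge n_0 = d(d+1)+1$, and $1/\alpha$ comes from the threshold factor $\alpha$ itself.

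Combining the two regimes gives $n_j \le \max\{M^*, \tfrac{16}{\alpha}N^*\}$ with probability $1-\delta$, where the probability statement comes from invoking Proposition~\ref{p:bound} (valid since, within a phase, the allocation $\bx^j_n$ is static conditionally on the previous phases) on the union of events needed to justify that surviving directions have bounded gaps. The main obstacle I anticipate is the second regime: precisely relating the phase-restricted optimal design value $\rho^j(\lambda^{*,j})$ over the \emph{data-dependent} set $\hY_j$ to the oracle complexity $N^*$ defined over the fixed set $\Y^*$ — one must argue that surviving directions behave no worse (up to the explicit constant $16/\alpha$) than the oracle's directions of interest, which requires carefully controlling both which arms survive (via Lemma~\ref{lem:dominated.arm}) and how the uncertainty measure transfers between $\hY_j$ and $\Y^*$. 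The first regime, by contrast, is essentially a direct unpacking of the definition of $M^*$.
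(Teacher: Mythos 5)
Your two-regime decomposition and your treatment of the second regime track the paper's proof quite closely: the paper likewise writes $\rho^j_{n_j} \leq \rho^*_{n_j}\max_{y\in\hY_j}\Delta^2(y)$, bounds $\max_{y\in\hY_j}\Delta(y) \leq 2c\sqrt{\log_n(K^2/\delta)\,\rho^{j-1}_{n_{j-1}}/n_{j-1}}$ from the failure of the discard condition of Lemma~\ref{lem:dominated.arm}, combines this with the failed phase-exit test at $n_j-1$ and the definition of $N^*$, and invokes Lemma~\ref{lem:allocation.vs.design} (the $(1+d(d+1)/n)\leq 2$ rounding loss, valid since $n_j\geq d(d+1)+1$) to arrive at $16N^*/\alpha$. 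A minor misattribution: the $16$ is $(2c/c)^2\cdot 2^2$, i.e.\ the squared gap bound times the squared rounding loss; no triangle inequality of the form $\|x-x'\|_{A^{-1}}\leq 2\max_x\|x\|_{A^{-1}}$ enters here.

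The genuine gap is in the first regime. You assert that the phase must terminate by $M^*$ steps because ``at $n=M^*$ the quantity $\rho^j(\lambda_{\bx^j_n})/n$ has already fallen below the previous-phase threshold.'' That claim is neither true in general nor what is needed: in a late phase the threshold $\alpha\rho^{j-1}(\lambda^{j-1})/n_{j-1}$ can be arbitrarily small and $M^*$ steps need not suffice to reach it (indeed $n_j$ may exceed $M^*$ whenever $\frac{16}{\alpha}N^*>M^*$ --- the statement is a max, not an unconditional bound by $M^*$). Moreover $\hY_j$ is frozen at the start of phase $j$, so ``the active directions are contained in $\Y^*$ after $M^*$ steps of phase $j$'' conflates the within-phase sample count with the cumulative count in the definition of $M^*$ in Eq.~\ref{eq:m.value}. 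The paper's argument is instead a dichotomy on the \emph{terminal} uncertainty of the phase: define $\epsilon^*$ as the largest confidence width at which some hyperplane $\C(x)\cap\C(x')$ with $x,x'\neq x^*$ is still intersected, so that $c\sqrt{\rho^{\X\Y}_{M^*}\log_n(K^2/\delta)/M^*}<\epsilon^*$. If phase $j$ ends with $\sqrt{\rho^j_{n_j}/n_j}\geq \epsilon^*/(c\sqrt{\log_n(K^2/\delta)})$, then $n_j>M^*$ is impossible: since $\hY_j\subseteq\Y$ gives $\rho^j_n\leq\rho^{\X\Y}_n$ and $\rho^j_n/n$ is non-increasing, one would get $\rho^j_{n_j}/n_j\leq\rho^{\X\Y}_{M^*}/M^*<(\epsilon^*)^2/(c^2\log_n(K^2/\delta))$, a contradiction; hence $n_j\leq M^*$. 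Otherwise the phase ends below $\epsilon^*$, all suboptimal directions have been discarded so $\hY_j\subseteq\Y^*$, and your second-regime argument applies. Without this contradiction structure --- and the explicit threshold $\epsilon^*$ that makes the case split well defined --- the bound $n_j\leq M^*$ in the first regime does not go through.
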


\begin{proof}[Proof of Lemma~\ref{lem:xy.adaptive}]
We first summarize the different quantities measuring the performance of an allocation strategy in different settings. For any design $\lambda\in\D^K$, we define
\begin{align}\label{eq:performance.def}
\rho^*(\lambda) = \max_{y\in\Y^*} \frac{||y||^2_{\Lambda_{\lambda}^{-1}}}{\Delta^2(y)}; \quad \rho^{\X\Y}(\lambda) = \max_{y\in\Y} ||y||^2_{\Lambda_{\lambda}^{-1}}; \quad \rho^j(\lambda) = \max_{y\in\hY_j} ||y||^2_{\Lambda_{\lambda}^{-1}}.
\end{align}
For any $n$, we also introduce the value of each of the previous quantities when the corresponding optimal (discrete) allocation is used
\begin{align}\label{eq:performance.opt}
\rho^*_n = \rho^*(\lambda_{\bx_n^*}); \quad \rho^{\X\Y}_n = \rho^{\X\Y}(\lambda_{\bx_n^{\X\Y}}); \quad \rho^j_n = \rho^j(\lambda_{\bx_n^j}).
\end{align}
Finally, we introduce the optimal designs
\begin{align}\label{eq:performance.opt.design}
\lambda^* = \arg\min_{\lambda\in\D^K} \rho^*(\lambda); \enspace \lambda^{\X\Y} = \arg\min_{\lambda\in\D^K} \rho^{\X\Y}(\lambda); \enspace \lambda^j = \arg\min_{\lambda\in\D^K} \rho^j(\lambda).
\end{align}

Let $\epsilon^*$ be the smallest $\epsilon$ such that there exists a pair $(x,x')$, with $x\neq x^*$ and $x'\neq x^*$, such that the confidence set $\S = \{\theta: \forall y\in\Y, |y^\top(\theta-\theta^*)| \leq \epsilon\}$ overlaps with the hyperplane $\C(x)\cap\C(x')$. Since $M^*$ is defined as the smallest number of steps needed by the $\X\Y$ strategy to avoid any overlap between $\S^*$ and the hyperplanes $\C(x)\cap\C(x')$, then we have that after $M^*$ steps
\begin{align}\label{eq:epsilon}
c\sqrt{\frac{\rho^{\X\Y}_{M^*}\log_n(K^2/\delta)}{M^*}} < \epsilon^*.
\end{align}
We consider two cases to study the length of a phase $j$.

\textbf{Case 1:} $\sqrt{\frac{\rho^j_{n_j}}{n_j}} \geq \frac{\epsilon^*}{c\sqrt{\log_n(K^2/\delta)}}$. From Eq.~\ref{eq:epsilon} it immediately follows that
\begin{align}\label{eq:case1}
\frac{\rho^j_{n_j}}{n_j} \geq \frac{\rho^{\X\Y}_{M^*}}{M^*}.
\end{align}
From definitions in Eqs.~\ref{eq:performance.def} and~\ref{eq:performance.opt}, since $\hY_j \subseteq \Y$ we have for any $n$, $\rho^j_n \leq \rho^{\X\Y}_n$. As a result, if $n_j \geq M^*$, since $\rho^j_n/n$ is a non-increasing function, then we would have the sequence of inequalities
\begin{align*}
\frac{\rho^j_{n_j}}{n_j} \leq \frac{\rho^j_{M^*}}{M^*} \leq \frac{\rho^{\X\Y}_{M^*}}{M^*},
\end{align*}
which contradicts Eq.~\ref{eq:case1}. Thus $n_j \leq M^*$.

\textbf{Case 2:} $\sqrt{\frac{\rho^j_{n_j}}{n_j}} \leq \frac{\epsilon^*}{c\sqrt{\log_n(K^2/\delta)}}$. We first relate the performance at phase $j$ with the performance of the oracle. For any $n$
\begin{align*}
\rho^j_n &= \rho^j(\lambda_{\bx_n^j}) \leq \rho^j(\lambda_{\bx_n^*}) = \max_{y\in\hY_j} y^\top \Lambda_{\lambda_{\bx_n^*}}^{-1} y = \max_{y\in\hY_j} \frac{y^\top \Lambda_{\lambda_{\bx_n^*}}^{-1} y}{\Delta^2(y)} \Delta(y) \leq \max_{y\in\hY_j} \frac{y^\top \Lambda_{\lambda_{\bx_n^*}}^{-1} y}{\Delta^2(y)} \max_{y\in\hY_j} \Delta^2(y).
\end{align*}
If now we consider $n=n_j$, then the definition case 2 implies that the estimation error $\sqrt{\rho^j_{n_j}/n_j}$ is small enough so that all the directions in $\Y-\Y^*$ have already been discarded from $\hY_j$ and $\hY_j \subseteq\Y^*$. Thus
\begin{align}\label{eq:phase.to.oracle}
\rho^j_{n_j} \leq \max_{y\in \Y^*} \frac{y^\top \Lambda_{\lambda_{\bx_{n_j}^*}}^{-1} y}{\Delta^2(y)} \max_{y\in\hY_j} \Delta^2(y) = \rho^*_{n_j}\max_{y\in\hY_j} \Delta^2(y).
\end{align}
This relationship does not provide a bound on $n_j$ yet. We first need to recall from Prop.~\ref{p:bound} that for any $y\in\Y$ (and notably for the directions in $\hY_j$) we have
\begin{align*}
|y^\top (\htheta_{j-1} - \theta^*)| \leq c\sqrt{y^\top A^{-1}_{j-1}y \log_n(K^2/\delta)}, 
\end{align*}
where $A_{j-1} = A_{\bx_{n_{j-1}}^{j-1}}$ is the matrix constructed from the pulls within phase $j-1$. Since $\bx_{n}^{j-1}$ is obtained from a $\X\Y$-allocation applied on directions in $\hY_{j-1}$, we obtain that for any $y\in\hY_j$
\begin{align*}
|y^\top (\htheta_{j-1} - \theta^*)| \leq c\sqrt{ \log_n(K^2/\delta)}\max_{y\in\hY_{j-1}}\sqrt{y^\top A^{-1}_{j-1}y} = c\sqrt{ \frac{\log_n(K^2/\delta)\rho^{j-1}_{n_{j-1}}}{n_{j-1}}},  
\end{align*}
Reordering the terms in the previous expression we have that for any $y\in\hY_j$
\begin{align*}
\Delta(y) \leq \hDelta_{j-1}(y) + c\sqrt{ \frac{\log_n(K^2/\delta)\rho^{j-1}_{n_{j-1}}}{n_{j-1}}}.
\end{align*}
Since the direction $y$ is included in $\hY_{j}$ then the discard condition in Eq.~\ref{eq:discard.arm} failed for $y$, implying that $\hDelta_{j-1}(y) \leq c\sqrt{ \frac{\log_n(K^2/\delta)\rho^{j-1}_{n_{j-1}}}{n_{j-1}}}$. Thus we finally obtain
\begin{align*}
\max_{y\in\hY_j} \Delta(y) \leq 2c\sqrt{ \frac{\log_n(K^2/\delta)\rho^{j-1}_{n_{j-1}}}{n_{j-1}}}.
\end{align*}
Combining this with Eq.~\ref{eq:phase.to.oracle} we have
\begin{align*}
\rho^j_{n_j} \leq \rho^*_{n_j} 4c^2\frac{\log_n(K^2/\delta)\rho^{j-1}_{n_{j-1}}}{n_{j-1}}.
\end{align*}
Using the stopping condition of phase $j$ and the relationship between the performance $\rho^j$, we obtain that at time $\bar n=n_j-1$
\begin{align*}
\frac{\rho^j_{\bar n}}{\bar n} \geq \alpha \frac{\rho^{j-1}_{n_{j-1}}}{n_{j-1}} \geq \frac{\alpha}{4c^2\log_n(K^2/\delta)} \frac{\rho^j_{n_j} }{\rho^*_{n_j}}
\end{align*}
We can further refine the previous inequality as
\begin{align*}
\frac{\rho^j_{\bar n}}{\bar n} \geq \frac{\alpha \rho^*_{N^*}}{4 N^*} \frac{N^*}{c^2 \log_n(K^2/\delta)\rho^*_{N^*}} \frac{\rho^j_{n_j} }{\rho^*_{n_j}} \geq \frac{\alpha \rho^*_{N^*}}{4 N^*} \frac{\rho^j_{n_j} }{\rho^*_{n_j}},
\end{align*}
where we use the definition of $N^*$ in Eq.~\ref{eq:oracle.complexity}, which implies $c\sqrt{ \log_n(K^2/\delta)\rho^*_{N^*}/N^*} \leq 1$. Reordering the terms and using $\bar n = n_j-1$, we obtain
\begin{align*}
n_j \leq 1 + \frac{4 N^*}{\alpha} \frac{\rho_{n_j-1}^j}{\rho^j_{n_j}} \frac{\rho^*_{n_j}}{\rho^*_{ N^*}}.
\end{align*}
From Lemma~\ref{lem:allocation.vs.design} and the optimal designs defined in Eq.~\ref{eq:performance.opt.design} we have
\begin{align*}
n_j \leq 1 + \frac{4  N^*}{\alpha} \frac{(1+d(d+1)/(n_j-1))\rho^j(\lambda^j)}{\rho^j(\lambda^j)} \frac{(1+d(d+1)/(n_j-1))\rho^*(\lambda^*)}{\rho^*(\lambda^*)}.
\end{align*}
Using the fact that the algorithm forces $n_j \geq d(d+1) + 1$, the statement follows.
\end{proof}


\begin{proof}[Proof of Theorem~\ref{thm:xy.adaptive}]
Let $J$ be the index of any phase for which $|\hX_{J}|>1$. Then there exist at least one arm $x\in\X$ (beside $x^*$) for which the discarding condition in Lemma~\ref{lem:dominated.arm} is not triggered, which corresponds to the fact that for all arms $x'\in\X$
\begin{align*}
c ||x-x'||_{A_{\bx_{n_J}^J}^{-1}} \sqrt{\log_n(K^2/\delta)} \geq \hDelta_J(x,x').
\end{align*}
By developing the right hand side, we have
\begin{align*}
\hDelta_J(x,x') \geq \Delta(x,x') - c ||x-x'||_{A_{\bx_{n_J}^J}^{-1}} \sqrt{\log_n(K^2/\delta)} \geq \Delta_{\min} - c \sqrt{\frac{\rho_{n_J}^J\log_n(K^2/\delta)}{n_J}}
\end{align*}
which leads to the condition
\begin{align}\label{eq:no.stop}
2c \sqrt{\frac{\rho_{n_J}^J\log_n(K^2/\delta)}{n_J}} \geq \Delta_{\min}.
\end{align}
Using the phase stopping condition and the initial value of $\rho^0$ we have
\begin{align*}
\frac{\rho_{n_J}^J}{n_J} \leq \alpha \frac{\rho_{n_{J-1}}^{J-1}}{n_{J-1}} \leq \alpha^J \frac{\rho^0}{n_0} = {\alpha^J}.
\end{align*}
By joining this inequality with Eq.~\ref{eq:no.stop} we obtain
\begin{align*}
\alpha^J \geq \frac{\Delta^2_{\min}}{4c^2 \log_n(K^2/\delta)},
\end{align*}
and it follows that $J \leq \log(4c^2 \log_n(K^2/\delta) / \Delta^2_{\min}) / \log(1/\alpha)$ which together with Lemma~\ref{lem:xy.adaptive} leads to the final statement.
\end{proof}


\section{Additional Empirical Results}\label{s:emp.result.plus}

For the setting described in Sec.~\ref{s:experiments}, in order to point out the different repartitions of the sampling budget over arms, in Fig.~\ref{f:avg_pulls} we present the number of samples allocated per arm, for the case when the input space $\X \subseteq \mathbb{R}^5$. We remind that the arms denoted $x_1, \dots, x_5$ form the canonical basis and arm $x_6 = [\cos(\omega)~~\sin(\omega)~~0~~0~~0]$. 

%
\begin{figure}[h]
\centering
\resizebox{0.8\textwidth}{!}{%
\begin{tabular}{|l|r|r|r|r|r|}
\hline
\textbf{Samples/arm} & \emph{$\X\Y$-oracle}  & \emph{$\X\Y$-adaptive}   & \emph{$\X\Y$ }   & \emph{$G$}      & \emph{Fully-adaptive} \\ \hline
$x_1$              & 207            & 263               &  29523    & 28014    & 740    \\ \hline
$x_2$              & 41440          & 52713             &  29524    & 28015    & 149220 \\ \hline
$x_3$              & 2              & 3                 &  29524    & 28015    & 1      \\ \hline
$x_4$              & 2              & 5                 &  29524    & 28015    & 1      \\ \hline
$x_5$              & 1              & 2                 &  29524    & 28015    & 1      \\ \hline
$x_6$              & 0              & 2                 &  1        & 1        & 1      \\ \hline
\textbf{Budget}    & 41652          & 52988             &  147620   & 140075   & 149964 \\ \hline
\end{tabular}
}
\caption{\small{The budget needed by the allocation strategies to identify the best arm when $\X \subseteq \mathbb{R}^5$ and their sample allocation over arms. $\mathcal{XY}$ and $G$ allocate samples uniformly over the canonical arms while $\mathcal{XY}$-oracle and $\mathcal{XY}$-adaptive use most of the samples for arm $x_2$ (corresponding to the most informative direction).}} 
\label{f:avg_pulls} 
\end{figure}

We can notice that even though the Fully-adaptive algorithm identifies the most informative direction and focuses the sampling on arm $x_2$, its sample complexity still has a growth linear in the dimension, due to the extra $\sqrt{d}$ term in his bound. Consequently, the advantage over the static strategies is canceled.  On the other hand, $\X\Y$-adaptive ``learns'' the gaps  from the observations and allocates the samples very similarly to $\X\Y$-oracle, without suffering a large loss in terms of the sampling budget. However, $\X\Y$-adaptive's sample complexity has to account for the the re-initializations made at the beginning of a new phase. 

Finally, we notice that in this problem that static allocations, $\X\Y$ and $G$, perform a uniform allocation over the canonical arms. Another interesting remark is that the number of pulls to one canonical arm is smaller than the samples that $\X\Y$-oracle allocated to $x_2$. This is explained by the ``mutual information'' coming from the multiple observations on all directions, which helps in reducing the overall uncertainty of the confidence set.

\end{document}